\RequirePackage{fix-cm}
\documentclass[11pt,oneside,letterpaper]{article}
\usepackage[mode=buildnew]{standalone}
\usepackage{eurosym}
\usepackage{graphicx,amsmath,amsfonts}
\usepackage{setspace}
\usepackage{amssymb}
\usepackage{dsfont}
\usepackage{amsmath}
\usepackage{amsfonts}
\usepackage{ifthen}
\usepackage{mathtools}
\usepackage{graphicx}
\usepackage{enumitem}
\usepackage{color}
\usepackage{framed}
\usepackage[margin=1in]{geometry}
\usepackage{natbib}
\usepackage[dvipsnames]{xcolor}
\usepackage[colorlinks=true, urlcolor=Mahogany, linkcolor=Mahogany, citecolor=Mahogany]{hyperref}
\usepackage{fp}
\usepackage{amsthm}
\usepackage{caption}
\usepackage{subcaption}
\usepackage{palatino}
\usepackage{cuted,balance}
\usepackage{amssymb}
\usepackage{soul}
\usepackage{graphicx,amsmath,amsfonts}
\usepackage{float}
\usepackage{dsfont}

\usepackage{subcaption}

\newcounter{panelgroup}

\makeatletter
\renewcommand\p@subfigure{}
\makeatother
\usepackage{pgfplots}
\pgfplotsset{compat=1.18}
\usepackage{xcolor}
\definecolor{panelgray}{gray}{0.85} 

\usepackage{style}

\newcommand{\panelplot}[1]{%
        \resizebox{0.98\linewidth}{!}{\input{#1}}%
    }
\newcommand{\graytriangle}{\textcolor{gray}{\(\blacktriangle\)}}

\usepackage[ruled]{algorithm2e}

\usepackage{booktabs}
\usepackage{multirow}

\begin{document}

\title{Beyond the Training Distribution: Evaluating Predictions Under Distribution Shift and Selection Bias}
\author{Annie Ulichney\thanks{Department of Statistics, University of California, Berkeley} \and Amanda Coston\footnotemark[1]}
\date{June 2026}
\maketitle

\begin{abstract}
  Understanding how a prediction model will perform in a new environment before deployment is essential to preventing harm when algorithms inform decision-making. Two common sources of model performance degradation are (i) \emph{covariate shift}, where the target covariate distribution differs from the source, and (ii) \emph{selective labels}, where the observability of outcomes depends on historical decisions. We study \emph{pre-deployment} model evaluation under the joint presence of covariate shift and labeling of outcomes selectively based on observed features. In particular, we present a double machine learning procedure for estimating the target risk of an arbitrary black-box prediction model under a general loss function. We show identification of this estimand under standard assumptions and derive a bias-corrected estimator based on the influence function of the target risk. Finally, we evaluate our estimator through experiments using the eICU electronic health records database, showing that it tracks the true target risk more accurately than methods that address either selective labels or covariate shift alone, as well as baselines that combine standard plug-in approaches.
\end{abstract}
\section{Introduction}
Machine learning models that achieve strong performance on a held-out test set routinely degrade when deployed in new environments. This degradation is well-documented across high-stakes settings: medical imaging models underdiagnose underrepresented demographic groups~\citep{seyyed2020chexclusion, seyyed2021underdiagnosis, larrazabal2020gender, vaidya2024demographic, phamboundaries2025}, clinical risk scores exhibit performance drops across hospitals and demographic groups~\citep{obermeyer2019dissecting, subasri2025detecting, jican2026}, and natural language processing systems underperform on linguistic minorities and dialects~\citep{park2018reducing, sap2019risk, mozafari2019bert, zhang2020hurtful}. These failures are often attributed to \emph{distribution shift}~\citep{quinonero2022dataset} where the training and deployment populations differ systematically. However, in many decision-making settings, distribution shift is only part of the problem: labels are often observed only for individuals selected by a prior human, institutional, or algorithmic decision, which can make the evaluation data an inaccurate reflection of the deployment population.


We study the intersection of these evaluation challenges. The first is \emph{covariate shift}, where the input distribution of features changes 
while the relationship between features and outcomes remains stable \citep{shimodaira2000improving}. Under covariate shift, considering a model's average performance on a held-out test set can obscure deployment failures; if a model's error is higher in feature regions that are more common in the deployment setting, then its target risk may be much worse than indicated by the test error~\citep{koh2021wilds}.
 
The second is the \emph{selective labels problem} \citep{lakkaraju2017selective}. In domains such as lending, hiring, and healthcare, outcomes are observed only for individuals selected by a decision policy: approved borrowers, hired applicants, or patients who receive diagnostic medical tests. Such a mechanism induces \emph{selection bias} because the observed labels are drawn from a decision-dependent subset of the population \citep{kleinberg2018human}. As a result, evaluating performance only on labeled samples from the training data can be misleading. 

The challenges of \emph{covariate shift} and \emph{selective labels} often coexist in precisely the high-stakes settings where reliable pre-deployment evaluation matters most. A salient example is the use of \emph{Clinical Decision-making Instruments (CDIs)}, predictive models that use patient demographics, symptoms, and test results to aid diagnosis, triage, and treatment decisions. Many CDIs are developed using data from patient populations that differ systematically from the populations in which they are ultimately deployed-- for example, models trained on predominantly white patient populations are often later used in demographically diverse populations~\citep{obra2025potential}. This practice induces a covariate shift. At the same time, many clinically relevant outcomes are observed only for patients who receive follow-up testing, monitoring, or treatment, and those decisions are shaped by existing clinical policies or the CDI itself. Thus, to accurately determine whether a CDI is suitable for deployment in a new hospital, a practitioner must address not only that the target population may differ systematically from the population in which the model was evaluated but also that outcomes are observed only for patients selected by prior clinical decisions. 

This work studies \emph{pre-deployment evaluation} of an arbitrary black-box prediction model under the joint presence of covariate shift and selective labels. Our first contribution is to formalize the problem of evaluating black-box predictors under simultaneous covariate shift and selective labeling. We frame evaluation around a \emph{target risk} functional that captures the risk the model would incur in the target population under a fixed general loss function.  This estimand answers the practically important question of how a model will perform in its actual deployment population, even when that population looks different from the training data and when outcome labels are systematically missing due to historical decisions. 

Second, we establish identification of the target risk using observable quantities in the data under standard assumptions. In particular, we establish identification supposing that the evaluator observes (i) selectively-labeled data from the source distribution (used to train the model) where selection is on observables and (ii) unlabeled samples from the target deployment environment.

Third, we derive the efficient influence function of the target risk estimand and use it to construct a double machine learning (DML) estimator that is asymptotically normal and semiparametrically efficient. Finally, we empirically validate our framework through controlled synthetic, semi-synthetic, and real-world experiments in healthcare settings; we show that our estimator tracks deployment performance more reliably than standard plug-in approaches or methods designed for evaluation under \emph{either} covariate shift or selective labels in isolation.
\begin{figure}
    \centering
    \begin{subfigure}{.98\textwidth}
      \centering
      \resizebox{0.65\linewidth}{!}{\input{figs/setting}}
    \end{subfigure}
    \caption{Illustration of the setting with simultaneous covariate shift and selective label observability for $m = 1$. The source population ($R = 1$) and target population ($R = 0$) differ in their covariate distributions. Within the source population (blue), outcomes are not observed uniformly; instead, the probability of label observability $P(D = 1|X, R = 1)$ is higher for larger magnitudes of the feature $X$. The shading of the source density indicates observability, with darker regions corresponding to a higher probability of observing outcomes.}
    \label{fig:diagram}
\end{figure}
\subsection{Background and Related Work}
Our work studies model evaluation in a setting that jointly addresses dataset shift and selective outcome observability, two challenges that are typically treated separately. We present a framework for \emph{pre-deployment} evaluation of a fixed predictor when the deployment population differs from the training population and, in the source data, labels are observed only for a feature-dependent biased subset. Our work builds on the literatures of evaluation under distribution shift, learning and evaluation under selectively observed outcomes, and semiparametric, doubly robust estimation. 

\vspace{-0.4em}
\paragraph{Covariate shift:} Our work addresses \emph{covariate shift} \citep{shimodaira2000improving}, in which the marginal feature  distribution $P(X)$ differs between the training and deployment environments while the conditional distribution of outcomes given features $P(Y | X)$ remains stable (see, e.g., \cite{quinonero2022dataset, liu2021towards, moreno2012unifying} for broader surveys). Classical approaches to addressing covariate shift rely on importance weighting or density ratio estimation \citep{shimodaira2000improving, yamada2013relative, kimura2024short}. More recent work develops doubly robust procedures for estimation and evaluation under covariate shift \citep{reddi2015doubly, kato2023double, chernozhukov2023automatic, morrison2024robust, guerdan2025doubly}. 

Beyond methods for correcting covariate shift, a growing body of work addresses the problem of evaluating models under covariate shift \citep{chen2022estimating, cai2023diagnosing, bialek2024estimating} and assessing whether a given shift will degrade performance \citep{rabanser2019failing, podkopaev2021tracking, mallinar2024minimum, hanneke2019value}. However, even seemingly benign covariate shifts at the population level among labeled samples may mask significant performance degradation for subpopulations whose labels are systematically unobserved in training data \citep{ktena2024generative}. Our work addresses these evaluation and equity concerns that emerge when covariate shift and selective labels occur simultaneously.

\vspace{-0.4em}
\paragraph{Multiple forms of distribution shift:} In contrast to covariate shift, \emph{conditional shift} occurs when the relationship between features and outputs $P(Y | X)$ changes across environments \citep{rojas2018invariant}. We note that the selective labels problem can induce an \emph{apparent} conditional shift by systematically hiding certain labels, causing the observed distribution to differ from the true conditional law $P(Y | X)$ \citep{lakkaraju2017selective, kleinberg2018human}. However, our framework imposes additional structure that distinguishes our setting from the joint presence of covariate and conditional shifts \citep{zhang2013domain, cai2023diagnosing}: we assume selection on observables, which implies that the unlabeled samples remain informative about both the source covariate distribution and the selection process. This assumption marks a key methodological distinction relative to settings that simultaneously treat covariate and conditional shifts. 
\vspace{-0.4em}
\paragraph{Selective labels and sample selection:}
The \emph{selective labels} problem \citep{lakkaraju2017selective, kleinberg2018human} arises when outcome labels are observed only for individuals selected by a historical policy or a human decision-maker. This occurs in settings such as credit scoring, criminal justice, and clinical medicine, where outcomes are only revealed after approval, release, or testing decisions \citep{obermeyer2019dissecting, coston2021characterizing}. In such settings, the labeled data are not representative of the full population on which the model will ultimately be used, which complicates both training and evaluation \citep{kallus2018residual}. 

Existing approaches include counterfactual risk estimation using causal inference \citep{coston2020counterfactual, chang2024biased}, correcting selection bias using heterogeneity across decision-makers \citep{kleinberg2018human, chen2023learning}, and targeted data acquisition or augmentation for subpopulations underrepresented in the labeled data \citep{de2018learning, ktena2024generative}. Our setting is also connected to questions about feedback loops and \emph{performativity}, where interventions guided by model predictions change the process by which future labeled data are generated \citep{perdomo2020performative, ensign2017decision}. 

\vspace{-0.4em}
\paragraph{Missing data: } The selective labels problem is a special case of the broader challenge of \emph{missing data} \citep{little2019statistical}. Under the assumption of selection on observables, selective labeling can be viewed as a missing at random (MAR) mechanism \citep{rubin1976inference}. Classical approaches to estimation under MAR include inverse probability weighting \citep{robins1994estimation, seaman2013review}, outcome regression and augmentation \citep{robins1995analysis, robins1995semiparametric}, and doubly robust estimation \citep{scharfstein1999adjusting, bang2005doubly}. Our approach is similar to the semiparametric missing-data literature in that it combines outcome regression with inverse-probability weighting to construct a doubly robust estimator. However, our setting departs from the standard MAR formulation by aiming to evaluate model performance in a target population with no observed outcomes. This distinction requires a procedure that simultaneously corrects for the selection and covariate transport mechanisms. More broadly, the challenge of missing data also connects to concerns about outcome measurement quality, where the ground-truth outcome is contested or incomplete \citep{robins1995semiparametric, jacobs2021measurement, schoeffer2025perils}.

\vspace{-0.4em}
\paragraph{Variance Reduction: } Our approach relates to methods that use auxiliary predictions or unlabeled samples to improve statistical efficiency. One similar perspective is that of \emph{control variates} which leverages the known mean of a correlated random variable to reduce the variance of an estimator \citep{lavenberg1981perspective, nelson1990control, glynn2002some}. 

Next, in combining unlabeled samples with regression-based estimates for unlabeled samples, our work similarly relates to \emph{prediction-powered inference} (PPI) \citep{angelopoulos2023prediction, angelopoulos2023ppiplus, ao2026ppi, mozer2026ppi}. PPI shares with our setting the goal of making use of unlabeled target data, though it does not address outcome missingness induced by historical decision policies or covariate shift. Our approach therefore relates to recent work that considers generalized PPI estimators to settings of missingness and distribution shift \citep{chen2025unified, zou2026generalized}. 

\vspace{-0.4em}
\paragraph{Double Machine Learning: } Our evaluation framework builds on the literature on \emph{doubly robust} (DR) and \emph{double machine learning} (DML) estimators \citep{robins1995analysis, scharfstein1999adjusting, bang2005doubly}. These methods are well-suited to settings with incomplete or biased data due to the fact that the resulting estimators remain consistent in parametric settings if either the propensity score or outcome model are correctly specified \citep{robins1994estimation,robins1995semiparametric, laan2003unified}. They enjoy fast rates of convergence in nonparametric settings \citep{kennedy2016semiparametric} and have been extended to debiased estimation of more general functionals using deep learning methods \citep{chernozhukov2022riesznet}. These methods have also been used for estimation in settings closely related to selective labels and dataset shift, e.g., policy learning \citep{dudik2011doubly}, distribution shift \citep{chen2022estimating, chernozhukov2023automatic, cai2023diagnosing, bialek2024estimating, lee2025doubly}, and data missingness \citep{miao2016varieties, wang2019doubly}.

\vspace{-0.4em}
\paragraph{Algorithmic Fairness: } Our setting is also connected to the literature on algorithmic fairness, since both covariate shift and selective outcome observability can systematically misrepresent a model's performance for subpopulations which are underrepresented in the labeled training data \citep{buolamwini2018gender, obermeyer2019dissecting, koh2021wilds}. For instance, models trained on populations from well-resourced institutions and deployed in under-resourced settings may systematically underperform for patient populations with different characteristics \citep{seyyed2020chexclusion, larrazabal2020gender, seyyed2021underdiagnosis, kamulegeya2023using}. 

Moreover, when outcome observability is shaped by historical decisions, labeled data may systematically exclude the subpopulations for whom fairness harms are most significant \citep{lakkaraju2017selective, kallus2018residual, kleinberg2018human, coston2020counterfactual}. This gap presents an important fairness concern: without pre-deployment evaluation methods, practitioners cannot assess \emph{a priori} whether their models will perpetuate or exacerbate inequities when deployed. 

\vspace{-0.4em}
\paragraph{Notation: } We use $X \indep Y$ to denote that two random variables $X$ and $Y$ are independent. For a measurable function $h$, we write $\norm{h}_{L_2(P)}\coloneq\left(\int h(x)^2 dP(x) \right)^{1/2}$ for its $L_2(P)$ norm. We also write $[n] \coloneq \{1, \hdots, n\}$. For sequences of random variables, $o_p(\cdot)$ and $O_p(\cdot)$ denote convergence in probability and stochastic boundedness, respectively. Let $\sigma(t) = (1 + \exp(-t))^{-1}$ denote the sigmoid function. We denote by $\mathds{1}\{A\}$ the indicator of the event $A$. $\norm{\cdot}_p$ is the $\ell_p$ norm. $I_m$ is the $m \times m$ identity matrix, and $\mathds{1}_m$ is the $m$-dimensional all ones vector. Finally, $\overset{d}{\to}$ denotes convergence in distribution. 

\section{Problem Setting}
We study the problem of evaluating a fixed prediction model under the joint presence of covariate shift and selective labeling. We observe $n$ independent and identically distributed (i.i.d.) draws
\begin{equation}\label{eqn:sample}
    Z \coloneqq (X, R, R \cdot D, R \cdot D \cdot Y)
\end{equation}
where $X \in \mathcal{X} \subset \RR^m$ is a covariate vector, $R \in \{0, 1\}$ is a domain indicator, and $D \in \{0, 1\}$ indicates whether the outcome of interest is observed.  The domain indicator $R$ satisfies $R = 1$ for units from the \emph{source} population and $R = 0$ for units from the \emph{target} population. Due to selective labeling, the latent outcome $Y \in \RR$ is observed only when $R = 1$ and $D =1$.   

Let $p_S$ and $p_T$ denote the densities of $X|R = 1$ and $X|R=0$, respectively. For an integrable random variable $Q$, we write  $\EE_S[Q] \coloneq \EE[Q | R = 1], \EE_T[Q] \coloneq \EE[Q | R = 0]$.
When $Q$ is a function of only $X$, these reduce to expectations under the covariate distributions $P_S$ and $P_T$, respectively. 

We aim to evaluate how well a fixed prediction model $f:\mathcal{X} \to \RR$ predicts $Y$ in the target environment. For a given loss function $\ell: \RR \times \RR \to \RR_{\geq 0}$ (e.g., squared loss), our estimand is the \emph{target risk}
\begin{equation}\label{eqn:target_risk}
    \psi \coloneqq \EE_T\left[ \ell \left( f(X), Y \right) \right].
\end{equation}
Note that $f$ may be an arbitrary fixed predictor (e.g., a deep network, an ensemble model, an LLM scorer), and $\ell$ may be an arbitrary fixed loss function. Our procedure treats both as black boxes. 
\section{Identification and Estimation of the Target Risk}
In this section, we identify $\psi$ under the joint presence of covariate shift and selective labels, and we then derive a novel estimator based on this identification.
We use $L \coloneqq \ell(f(X), Y)$  as shorthand notation for the loss under $f$ on observed outcomes. Also define the following \emph{nuisance functions}:
\begin{equation}\label{eqn:nuisance_1}
    \rho \coloneqq \PP(R=0), \quad 
        g(X)   \coloneqq \PP(R=0 \mid X),\quad            \mu(X)  \coloneqq \EE\!\left[\,L \mid X, R=1, D=1\,\right],
\end{equation}
and the pooled distribution and source distribution propensity scores, respectively, as 
\begin{equation*}
    \pi(X) \coloneq \PP(D = 1, R = 1 \mid X), \quad \pi_S(X) \coloneqq \PP(D=1 \mid X, R = 1).
\end{equation*}
We assume throughout that both the source and target domains are represented, i.e., that 
\begin{equation}\label{eqn:rho_bound}
    0 < \rho < 1.
\end{equation}

Estimation of $\psi$ is complicated both by the fact that $Y$ is unobserved in the target domain ($R = 0$) and by the fact that, in the source domain, $Y$ is selectively observed (when $D = 1, R = 1$). 
To ensure identifiability, we impose the following assumptions, which are standard in the causal inference and transfer learning literatures. 
\begin{assumption}[Selection on observables in the source]\label{assumption:no_unobserved_confounding}
    $Y \indep D \mid X, R = 1$.
\end{assumption}

\begin{assumption}[Covariate Shift]\label{assumption:cov_shift} $P_S(Y \mid X) = P_T(Y \mid X).$
\end{assumption}

\begin{assumption}[Positivity]\label{assumption:positivity}
    There exists $\varepsilon> 0$ such that $\pi_S(X) > \varepsilon \: \: \text{almost surely}$.
\end{assumption}

\begin{assumption}[Bounded likelihood ratio]\label{assumption:bounded_likelihood}
    There exists $C < \infty$ such that $\frac{dP_T}{dP_S}(x) \leq C \quad \forall x \in \mathcal{X}$.
\end{assumption}

These assumptions identify the target risk from observed data. Intuitively, these conditions require that (i) label observability in the source population is as good as random conditional on $X$, (ii) the conditional relationship between covariates and outcomes is invariant across the source and target domains, (iii) every covariate profile admits a positive probability of being observed in the source population, and (iv) the source and target distributions' supports overlap sufficiently. \footnote{These assumptions are standard, though only some have observable implications; in particular, Assumptions~\ref{assumption:no_unobserved_confounding} and \ref{assumption:cov_shift} are not directly testable from observable data while Assumption~\ref{assumption:positivity} can be partially tested from observable data \citep{petersen2012diagnosing}.  In \Cref{sec:robustness}, we empirically examine the sensitivity of our estimators to violations of each assumption.}

\begin{proposition}[Identification of the Target Risk]\label{prop:identification_target_risk}
    Under Assumptions~\ref{assumption:no_unobserved_confounding}-\ref{assumption:bounded_likelihood} and with nuisance functions $\pi, \rho, g,$ and $\mu$ as defined in \eqref{eqn:nuisance_1}, the target risk $\psi$ is identifiable from the observed data as
    \begin{equation*}
        \psi = \EE_T\left[ \mu(X) \right] = \EE_S\left[ \frac{p_T(X)}{p_S(X)} \cdot \frac{D }{\pi_S(X)}\: L  \right].
    \end{equation*}
\end{proposition}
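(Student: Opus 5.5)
The plan is to prove the two equalities in turn. Each one follows from iterated expectations together with the assumptions, applied in order.

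\textbf{First equality.} Write $\psi = \EE_T[\ell(f(X),Y)] = \EE_T\big[\EE[L \mid X, R=0]\big]$ by the tower property. Here $L=\ell(f(X),Y)$ is understood as a function of the latent $Y$, and integrability is assumed since $\ell \ge 0$.
\begin{itemize}
\item By Assumption~\ref{assumption:cov_shift}, the conditional law of $Y$ given $X$ is the same under $R=0$ and $R=1$. Since $f$ and $\ell$ are fixed measurable maps, this gives $\EE[L \mid X, R=0] = \EE[L \mid X, R=1]$ for $P_T$-a.e.\ $x$.
\item That right-hand side is only defined $P_S$-a.e.; Assumption~\ref{assumption:bounded_likelihood} gives $P_T \ll P_S$, so the equality also holds $P_T$-a.e.
\item By Assumption~\ref{assumption:no_unobserved_confounding}, $\EE[L \mid X, R=1] = \EE[L \mid X, R=1, D=1] = \mu(X)$.
\item Assumption~\ref{assumption:positivity} ensures the event $\{D=1\}$ has positive conditional probability, so $\mu(X)$ is well defined.
\end{itemize}
Combining these gives $\psi = \EE_T[\mu(X)]$.

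\textbf{Second equality.} Start from the right-hand side and condition on $X$ within the source:
\begin{equation*}
\EE_S\!\left[\frac{p_T(X)}{p_S(X)}\frac{D}{\pi_S(X)}L\right] = \EE_S\!\left[\frac{p_T(X)}{p_S(X)}\frac{1}{\pi_S(X)}\,\EE[DL \mid X, R=1]\right].
\end{equation*}
Expand the inner term as $\EE[DL\mid X,R=1] = \PP(D=1\mid X,R=1)\,\EE[L\mid X,R=1,D=1] = \pi_S(X)\mu(X)$. Note that $DL$ is observed, since $L$ enters only when $D=1$ and $R=1$.
\begin{itemize}
\item The factor $\pi_S$ cancels, which is legitimate because $\pi_S > \varepsilon$.
\item This leaves $\EE_S\big[\tfrac{p_T(X)}{p_S(X)}\mu(X)\big] = \int \mu(x)\,p_T(x)\,dx = \EE_T[\mu(X)]$ by change of measure.
\item The change of measure uses the density ratio $dP_T/dP_S = p_T/p_S$, which exists and is bounded by Assumption~\ref{assumption:bounded_likelihood}. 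Boundedness also guarantees integrability of the weighted term given integrability of $\mu$ under $P_T$.
\end{itemize}

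\textbf{Remarks and main obstacle.} One can optionally rewrite the weights in terms of the pooled nuisances. By Bayes, $p_T(x)/p_S(x) = \frac{g(x)}{1-g(x)}\cdot\frac{1-\rho}{\rho}$, and $\pi(x) = (1-g(x))\pi_S(x)$. This expresses the estimand entirely through the nuisances in \eqref{eqn:nuisance_1}, with $0<\rho<1$ from \eqref{eqn:rho_bound}.

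There is no deep obstacle. The step needing the most care is the measure-theoretic bookkeeping:
\begin{itemize}
\item showing that conditional expectations defined only $P_S$-a.e.\ transfer to $P_T$ via absolute continuity;
\item justifying that $\mu$ and $\pi_S$ are well defined where needed;
\item ensuring integrability, so that Fubini and the tower property apply.
\end{itemize}
I would handle all three by invoking Assumption~\ref{assumption:bounded_likelihood} for $P_T\ll P_S$ and Assumption~\ref{assumption:positivity} for the denominators.
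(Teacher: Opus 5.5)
Your proposal is correct and follows essentially the same argument as the paper: the tower property with covariate shift and selection on observables gives the first equality, and for the second you use the identity $\EE_S[DL \mid X] = \pi_S(X)\mu(X)$ together with a change of measure, merely running the chain in reverse (starting from the weighted expression rather than from $\EE_T[\mu(X)]$). Your added bookkeeping is sound and spells out what the paper leaves implicit: using $P_T \ll P_S$ to transfer almost-sure equalities, and using positivity to make $\mu$ well defined.
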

\Cref{prop:identification_target_risk} is the central identification result of the paper. To our knowledge, it is the first identification of an evaluation functional that simultaneously transports across populations \emph{and} corrects for a selective labeling process; existing covariate shift identifications presume fully labeled source data, and existing identifications for selectively labeled data presume a single population. This result lays the foundation for our proposed evaluation procedure by connecting the unobservable target risk to quantities that can be learned from observable, pre-deployment data.

The first equality shows that the target risk can be written as an expectation over the \emph{target} covariate distribution of the nuisance function $\mu(X)$, which summarizes the model's conditional expected loss after accounting for selective labels. We note that $\mu(X)$ depends only on covariates, i.e., it does not require observing target outcome labels. This expression demonstrates that pre-deployment evaluation is feasible in our setting; if we could estimate $\mu(\cdot)$ from selectively labeled source data, we could estimate $\psi$ by averaging $\mu(X)$ over the target population. 

The second expression provides an alternative identification as an expectation with respect to the source population: the first factor $\tfrac{p_T(X)}{p_S(X)}$ transports the loss estimate from the source to the target population to account for covariate shift, and the second factor $\tfrac{D}{\pi_S(X)}$ is an inverse propensity weighted (IPW) correction for selective labels. Together, these equalities formalize the insight that, even when deployment labels are unavailable and source labels are observed only through a selective labeling process, we can still evaluate a model's target risk from pre-deployment data. A proof is provided in Appendix~\ref{proof:identification_target_risk}.  
\subsection{Efficiency Theory for Target Risk Estimation}\label{sec:estimation_target_risk}
Next, we provide a novel theoretical result that characterizes the efficient influence function (EIF) of $\psi$.
The EIF enables us to construct an estimator that can achieve $\sqrt{n}$ rates while allowing for flexible nuisance estimation. Let $\mathcal{P}$ denote the nonparametric model defined by Assumptions~\ref{assumption:no_unobserved_confounding}-\ref{assumption:bounded_likelihood}. The following result identifies the efficient influence function of the target risk $\psi$ and gives the corresponding von Mises expansion. This result demonstrates the bias-correction step that stabilizes the na\"{\i}ve plug-in estimators.

\begin{proposition}[Target Risk Influence Function]\label{prop:eif}
    For every $\PP, \overline{\PP} \in \mathcal{P}$, the map $\psi: \mathcal{P} \to \RR $ admits the expansion 
    \begin{equation}\label{eqn:von_mises}
        \psi(\overline{\PP}) - \psi(\PP) = \int \varphi(z; \overline{\PP}) \, d( \overline{\PP} - \PP)(z) + R_2(\PP, \overline{\PP}),
    \end{equation}
    where the efficient influence function is 
    \begin{equation}\label{eqn:eif}
        \varphi(Z; \PP) = \frac{RD}{\pi(X)} \frac{g(X)}{\rho} (L - \mu(X)) + \frac{1 - R}{\rho} ( \mu(X) - \psi(\PP))
    \end{equation}
    and the remainder is
    \begin{equation*}
        R_2(\PP, \overline{\PP}) = \left( \frac{\overline{\rho} - \rho}{\overline{\rho}}\right) \left( \psi(\overline{\PP}) - \psi(\PP)\right) + \frac{1}{\overline{\rho}}\int g \left( \frac{\pi}{\overline{\pi}} - 1\right) \left( \mu - \overline{\mu}\right)d\PP + \frac{1}{\overline{\rho}} \int (\overline{g} - g) \frac{\pi}{\overline{\pi}}(\mu - \overline{\mu})d\PP.
    \end{equation*}
\end{proposition}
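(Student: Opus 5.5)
The plan is to verify the von Mises expansion \eqref{eqn:von_mises} by direct computation. Then I will argue that, because $\mathcal{P}$ is nonparametric (Assumptions~\ref{assumption:no_unobserved_confounding}--\ref{assumption:bounded_likelihood} restrict the observed-data law only through inequalities), the unique influence function is automatically efficient.

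By \Cref{prop:identification_target_risk} and Bayes' rule, $p_T(x)/p_S(x)\cdot(1-\rho)/\rho = g(x)/(1-g(x))$. Also $\pi = \pi_S(1-g)$. Together these give the working representation
\[
\psi(\PP) = \frac{1}{\rho}\int g\,\mu \, dP_X .
\]
Positivity and the bounded likelihood ratio guarantee that $\pi$ is bounded away from zero on the support of $g\,dP_X$, so all ratios below are finite.

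The key step is computing $\int \varphi(z;\overline{\PP})\,d(\overline{\PP}-\PP)(z)$. Since the EIF is mean zero under its own law, $\int\varphi(\cdot;\overline{\PP})\,d\overline{\PP}=0$, so this integral equals $-\EE_{\PP}[\varphi(Z;\overline{\PP})]$. I will evaluate this expectation by iterated expectations conditional on $X$:
\[
\EE_\PP\!\left[\tfrac{RD}{\overline\pi}\tfrac{\overline g}{\overline\rho}(L-\overline\mu)\right] = \tfrac{1}{\overline\rho}\int \overline g\,\tfrac{\pi}{\overline\pi}(\mu-\overline\mu)\,d\PP,
\qquad
\EE_\PP\!\left[\tfrac{1-R}{\overline\rho}(\overline\mu-\overline\psi)\right] = \tfrac{1}{\overline\rho}\int g\,\overline\mu\,d\PP - \tfrac{\rho}{\overline\rho}\overline\psi .
\]
The first identity uses $\EE[RD\,L\mid X]=\pi\mu$, which is exactly where Assumption~\ref{assumption:no_unobserved_confounding} and the definition of $\mu$ enter.

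Next I add $\psi(\overline{\PP})-\psi(\PP)$ and substitute $\psi(\PP)=\rho^{-1}\int g\mu\,d\PP$. This writes $\overline\psi-\psi+\EE_\PP[\varphi(Z;\overline\PP)]$ as $\tfrac{\overline\rho-\rho}{\overline\rho}(\overline\psi-\psi)$ plus integral terms, using $\rho\,\psi=\int g\mu$. The integral terms collect to
\[
\tfrac{1}{\overline\rho}\int\Big[\overline g\tfrac{\pi}{\overline\pi}(\mu-\overline\mu) - g(\mu-\overline\mu)\Big]d\PP .
\]
Adding and subtracting $g\tfrac{\pi}{\overline\pi}(\mu-\overline\mu)$ splits this into the two displayed integrals $\tfrac1{\overline\rho}\int g(\tfrac{\pi}{\overline\pi}-1)(\mu-\overline\mu)$ and $\tfrac1{\overline\rho}\int(\overline g-g)\tfrac{\pi}{\overline\pi}(\mu-\overline\mu)$. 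This matches $R_2$ exactly. The bookkeeping of the $\rho$ versus $\overline\rho$ factors is the only delicate part, and it is where I expect sign errors. I will handle it by first writing everything as $\overline\rho^{-1}(\cdots)$ and then regrouping.

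Finally, I need $\varphi$ to be the influence function, and the efficient one. The remainder $R_2$ is second order: it is a product of differences $(\overline\rho-\rho)(\overline\psi-\psi)$, $(\pi-\overline\pi)(\mu-\overline\mu)$ and $(g-\overline g)(\mu-\overline\mu)$. So along any smooth submodel $\PP_t$ with $\overline\PP=\PP_t$, we get $\tfrac{d}{dt}R_2(\PP,\PP_t)\big|_{t=0}=0$. Hence $\tfrac{d}{dt}\psi(\PP_t)=\EE[\varphi(Z;\PP)\,s(Z)]$ for every score $s$, which shows pathwise differentiability with gradient $\varphi$; the stated mean-zero property holds by the tower property above with $\overline\PP=\PP$. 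Because the tangent space of the nonparametric model $\mathcal P$ is all of $L_2^0(\PP)$, the gradient is unique and therefore efficient.

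The main technical obstacle is justifying this differentiation: showing that $R_2(\PP,\PP_t)=o(t)$. I would handle it with Cauchy--Schwarz. This bounds the remainder by products of $L_2(\PP)$ norms of nuisance differences, each of which is $O(t)$ along quadratic-mean-differentiable paths, given that $\overline\pi$ is bounded away from zero and $g,\mu$ are bounded.
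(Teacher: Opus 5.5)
Your proposal is correct and follows essentially the same route as the paper's proof. The paper also verifies the von Mises expansion directly. It uses the identities $\EE_\PP[RD\,h]=\EE_\PP[\pi\,\EE_\PP[h\mid X,R=1,D=1]]$ and $\EE_\PP[(1-R)h(X)]=\EE_\PP[g h]$, the split $\overline\psi-\psi=\frac{\overline\rho-\rho}{\overline\rho}(\overline\psi-\psi)+\frac{\rho}{\overline\rho}(\overline\psi-\psi)$, and the add-and-subtract of $g\frac{\pi}{\overline\pi}(\mu-\overline\mu)$. It then concludes efficiency from the second-order remainder by citing Kennedy's Lemma 2; your tangent-space argument is that lemma unpacked. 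The paper's only addition is a heuristic point-mass derivation of the candidate $\varphi$, which your verification-based route does not need.

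One slip: Bayes' rule gives $\frac{p_T}{p_S}\cdot\frac{\rho}{1-\rho}=\frac{g}{1-g}$, not the version with the factor $\frac{1-\rho}{\rho}$. It does no harm, because your working representation $\psi=\rho^{-1}\int g\mu\,dP_X$ is correct, follows directly from $\psi=\EE[(1-R)\mu(X)]/\rho$, and is all you actually use.
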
 
The EIF given in~\Cref{prop:eif} incorporates two complementary signals from the source and target distributions, respectively, in each term. The first term of \eqref{eqn:eif} is a residual correction computed on observed labeled source units ($RD = 1$). Here, $RD/\pi(X)$ is the Horvitz-Thompson (inverse probability) weight for appearing as a labeled source observation in the pooled sample. This factor corrects the selective-labeling mechanism. The additional factor $g(X) /\rho$ transports the correction toward the target covariate law. Indeed, if $p(x) = \rho \cdot p_T(x) + (1 - \rho) \cdot p_S(x)$ denotes the pooled covariate density, then Bayes' rule gives $\frac{g(X)}{\rho} = \frac{p_T(X)}{p(X)}$.

The residual $L - \mu(X)$ centers the labeled source contribution conditional on $X$, which reduces variance relative to the uncentered representation. The second term is the target domain plug-in component. Since labels (and, therefore, $L$) are not observed in the target domain ($R = 0$), this term averages the conditional loss regression $\mu(X)$ over unlabeled target covariates. The factor $(1-R)/\rho$ converts the pooled expectation into an expectation in the target domain, and subtraction of $\psi(\mathbb{P})$ centers the influence function (since $\psi = \EE_T[\mu(X)]$). Thus, the EIF combines a transported labeled-source residual correction with a centered target-domain plug-in term. 

The proof of~\Cref{prop:eif} applies semiparametric calculus to the first identified representation given in ~\Cref{prop:identification_target_risk}. Differentiating this functional yields the two terms of \eqref{eqn:eif}. The von Mises remainder is obtained by comparing $\psi(\overline{\PP})-\psi(\PP)$ to the first-order expansion, then collecting and controlling leftover terms, which are the products of nuisance functions. We then invoke \citet[Lemma 2]{kennedy2024semiparametric} to conclude efficiency. A full proof is provided in \Cref{sec:proof_prop_eif}. 

\subsection{A novel double machine learning estimator of target risk}\label{sec:estimator}
We use the efficient influence function of $\psi$ (Proposition~\ref{prop:eif}) to construct a \emph{double machine learning} (DML) estimator of the target risk \citep{robins1995analysis, scharfstein1999adjusting, bang2005doubly}. Our construction uses an EIF-based bias correction that makes the DML estimator robust to errors in the nuisance functions $\mu, \pi$ and $g$. The DML approach combines bias correction with sample splitting (or cross-fitting) to remove the bias that would otherwise emerge due to using the same data for nuisance estimation and evaluation.

Let $\mathcal{S}_n \coloneq \{Z_i\}_{i=1}^n$ denote an evaluation sample of $n$ i.i.d. draws from $P$, and let $\widehat{\pi}$, $\widehat{g}$, $\widehat{\mu}$ be nuisance estimators fit on an auxiliary sample $\mathcal{S}'_n$ consisting of $n$ samples and independent of $\mathcal{S}_n$. Motivated by the EIF in \eqref{eqn:eif}, we propose the \emph{DML estimator} of $\psi$ as 
\begin{equation}\label{eqn:dml_estimator}
    \widehat{\psi} = \frac{1}{n} \frac{1}{\widehat{\rho}} \sum_{i=1}^n \left[ \frac{R_i D_i }{\widehat{\pi}(X_i)} \widehat{g}(X_i) \left( L_i  - \widehat{\mu}(X_i) \right) + (1 - R_i) \widehat{\mu}(X_i)\right]
\end{equation}
where $\widehat{\rho}=n^{-1}\sum_{i=1}^n\mathds{1}\{R_i = 0\}$ is the empirical estimator of $\rho$. The first summand corrects labeled source units ($R_i = 1, D_i = 1$) for the selection and covariate shift mechanisms; the second is a regression-based plug-in estimate for target units ($R_i =0$).

\Cref{thrm:estimator-guarantees_short} given below establishes that $\widehat{\psi}$ is $\sqrt{n}$-consistent, asymptotically linear, and semiparametrically efficient under standard regularity conditions and mild convergence-rate requirements for nuisance function estimation. For simplicity, we state the results under single splitting and note that analogous guarantees for cross-fitting follow under analogous foldwise conditions.
\begin{theorem}[Asymptotic properties of the DML estimator $\widehat{\psi}$]\label{thrm:estimator-guarantees_short}
    Let $\widehat{\psi}$ be the estimator in \eqref{eqn:dml_estimator}, constructed with nuisance estimators $\widehat{\pi}$, $\widehat{g}$, and $\widehat{\mu}$ trained on an auxiliary sample $\mathcal{S}_n'$ independent of $\mathcal{S}_n$. Suppose Assumptions~\ref{assumption:no_unobserved_confounding}-\ref{assumption:bounded_likelihood} hold. In addition, suppose that (i) the nuisance estimators are $L_2(P)$-consistent, (ii) each of $\widehat{\mu}, \widehat{\pi}$, and $\widehat{g}$ converges at a rate $o_p(n^{-1/4})$ in $L_2(P)$, (iii) the estimated propensity denominators are bounded away from zero with probability tending to one, and (iv) $\EE\left[ \varphi(Z; P)^2\right]< \infty$. Then, $\widehat{\psi}$ is asymptotically linear with influence function $\varphi(\cdot; P)$: 
    \begin{equation}\label{eqn:estimator_asymptotic_linearity}
        \widehat{\psi}-\psi(P) = \frac{1}{n} \sum_{i = 1}^n \varphi(Z_i; P) + o_p(n^{-1/2}),
    \end{equation}
    and hence
    \begin{equation}\label{eqn:estimator_asymptotic_normality}
        \sqrt{n}\left(\widehat{\psi}-\psi \right) \overset{d}{\longrightarrow} \NN\left( 0, V_{\varphi} \right), \quad V_{\varphi} \coloneq \operatorname{Var}(\varphi(Z; P)). 
    \end{equation}
\end{theorem}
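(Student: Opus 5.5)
The plan is the standard three-term decomposition for one-step/DML estimators, built on the von Mises expansion of \Cref{prop:eif}. One wrinkle: $\widehat{\psi}$ in \eqref{eqn:dml_estimator} divides by $\widehat{\rho}$ rather than $\rho$. So I first work with the un-normalized quantity
\begin{equation*}
\widehat{\theta} \coloneqq \frac{1}{n}\sum_{i=1}^n \left[ \frac{R_iD_i}{\widehat{\pi}(X_i)}\widehat{g}(X_i)(L_i-\widehat{\mu}(X_i)) + (1-R_i)\widehat{\mu}(X_i)\right],
\end{equation*}
so that $\widehat{\psi}=\widehat{\theta}/\widehat{\rho}$. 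Write $\widehat{\theta}=\mathbb{P}_n[\widehat{h}]$, where $\widehat{h}$ is the bracketed function, and let $h$ be the same function with the true nuisances. Using $\EE[(1-R)\mu(X)]=\rho\psi$ and $\EE[RD\,g(L-\mu)/\pi]=0$ (the latter by \Cref{assumption:no_unobserved_confounding}), we get $\theta\coloneqq \EE[h]=\rho\psi$. The estimator then decomposes as
\begin{equation*}
\widehat{\theta}-\theta = (\mathbb{P}_n-P)h + (\mathbb{P}_n-P)(\widehat{h}-h) + \bigl(P\widehat{h}-\theta\bigr).
\end{equation*}

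\textbf{Step 1 (empirical process term).} Conditional on $\mathcal{S}_n'$, $\widehat{h}-h$ is a fixed function. By Chebyshev,
\begin{equation*}
(\mathbb{P}_n-P)(\widehat{h}-h)=O_p\bigl(n^{-1/2}\norm{\widehat{h}-h}_{L_2(P)}\bigr).
\end{equation*}
To show $\norm{\widehat{h}-h}_{L_2(P)}=o_p(1)$, expand the difference of the weighted residual terms into pieces that are each a product of a bounded factor and a nuisance error. The bounded factors come from condition (iii) ($\widehat{\pi}$ bounded away from zero), $g,\widehat{g}\in[0,1]$ (after truncation), and $\pi\ge\varepsilon(1-\rho)$-type bounds from \Cref{assumption:positivity}. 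For the terms involving $L$, I need a moment bound on $L$, which I take from (iv) and \Cref{assumption:bounded_likelihood}. Condition (i) then gives consistency, so this term is $o_p(n^{-1/2})$.

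\textbf{Step 2 (bias term), the main obstacle.} Condition on the auxiliary sample and iterate expectations given $X$, using $\EE[RD\mid X]=\pi$ and $\EE[RDL\mid X]=\pi\mu$ (again via \Cref{assumption:no_unobserved_confounding}). This gives
\begin{align*}
P\widehat{h}-\theta &= \int \Bigl[\frac{\pi}{\widehat{\pi}}\widehat{g}(\mu-\widehat{\mu}) + g(\widehat{\mu}-\mu)\Bigr]dP\\
&= \int g\Bigl(\frac{\pi}{\widehat{\pi}}-1\Bigr)(\mu-\widehat{\mu})\,dP + \int(\widehat{g}-g)\frac{\pi}{\widehat{\pi}}(\mu-\widehat{\mu})\,dP .
\end{align*}
This matches the last two terms of $R_2$ in \Cref{prop:eif}, multiplied by $\overline{\rho}$. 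By Cauchy--Schwarz and condition (iii), the first integral is bounded by $C\norm{\widehat{\pi}-\pi}\,\norm{\widehat{\mu}-\mu}$ and the second by $C\norm{\widehat{g}-g}\,\norm{\widehat{\mu}-\mu}$. Each product is $o_p(n^{-1/2})$ by the $o_p(n^{-1/4})$ rates in (ii). The delicate part is making the boundedness constants uniform with probability tending to one, which is exactly what (iii) supplies.

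\textbf{Step 3 (delta method for $\widehat{\rho}$).} Steps 1--2 give
\begin{equation*}
\widehat{\theta}-\theta=(\mathbb{P}_n-P)h+o_p(n^{-1/2}),
\end{equation*}
and we also have $\widehat{\rho}-\rho=(\mathbb{P}_n-P)(1-R)$. Then
\begin{equation*}
\widehat{\psi}-\psi=\frac{\widehat{\theta}-\psi\widehat{\rho}}{\widehat{\rho}} =\frac{1}{\rho}(\mathbb{P}_n-P)\bigl[h-\psi(1-R)\bigr]+o_p(n^{-1/2}),
\end{equation*}
using $\widehat{\rho}\to\rho>0$ by \eqref{eqn:rho_bound} and the fact that the numerator is $O_p(n^{-1/2})$. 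Since $\rho^{-1}[h-\psi(1-R)]=\varphi(Z;P)$ from \eqref{eqn:eif}, this is \eqref{eqn:estimator_asymptotic_linearity}. Finally, $\EE\varphi=0$ and, by (iv), $\operatorname{Var}(\varphi)<\infty$, so the classical CLT together with Slutsky yields \eqref{eqn:estimator_asymptotic_normality}.
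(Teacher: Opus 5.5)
Your argument is the paper's proof with slightly different bookkeeping. Your $\widehat h-h$ is exactly the paper's $B$, and Step~1 is its conditional-Chebyshev bound on $(\mathbb{P}_n-P)B$. Step~2 reproduces its bias bound $PB=\mathbb{E}[(g-\widehat g\,\pi/\widehat\pi)(\widehat\mu-\mu)]$. Your delta-method handling of $\widehat\rho$ is equivalent to the paper's $(\rho/\widehat\rho-1)\mathbb{P}_n\varphi=O_p(n^{-1})$ term. Steps~2 and~3 are correct, and so is the structure of Step~1. The problem is only how you justify its moment bound.

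That justification fails. The two summands of $\varphi$ have disjoint supports, since $RD(1-R)=0$. So condition (iv) only yields $\mathbb{E}[RD\,g^2(L-\mu)^2/(\rho\pi)^2]<\infty$, which weights $(L-\mu)^2$ by $g^2$. Assumption~\ref{assumption:bounded_likelihood} bounds $dP_T/dP_S$ from above, not from below. From above it keeps $g$ away from $1$, and with Assumption~\ref{assumption:positivity} it gives $\pi\ge\varepsilon(1-\rho)/(1-\rho+\rho C)$ (not $\varepsilon(1-\rho)$). Because there is no lower bound, nothing controls $(L-\mu)^2$ where $g\approx 0$, e.g.\ on source regions the target does not charge. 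Yet the piece $RD(\widehat g/\widehat\pi-g/\pi)(L-\mu)$ of $\widehat h-h$ is nonzero there whenever $\widehat g>0$, as with any logistic fit.

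Suppose the loss is heavy-tailed in such a region. Then $\|\widehat h-h\|_{L_2(P)}$ can be infinite. Moreover, $(\mathbb{P}_n-P)(\widehat h-h)=o_p(n^{-1/2})$ can fail, for instance with tail index near $1$ and $\widehat g\asymp n^{-1/3}$ there, even though (i)--(iv) all hold.

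The paper closes this with a hypothesis that is not among (i)--(iv) of the short statement: the uniform conditional bound $\mathbb{E}[(L-\mu)^2\mid X,R=1,D=1]\le M$ in \eqref{eqn:estimator-guarantees-moment}. That bound immediately gives $P[(\widehat g/\widehat\pi-g/\pi)^2RD(L-\mu)^2]\le M\|\widehat g/\widehat\pi-g/\pi\|_{L_2(P)}^2=o_p(1)$. You need to add this assumption rather than try to derive it from (iv) and Assumption~\ref{assumption:bounded_likelihood}. Alternatively, the weaker $\mathbb{E}[RD(L-\mu)^2]<\infty$ suffices, via a truncation argument, since $\widehat g/\widehat\pi-g/\pi$ is uniformly bounded on the high-probability overlap event.
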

A more general statement of \Cref{thrm:estimator-guarantees_short}, which formalizes conditions (i)-(iv), is given in \Cref{sec:estimator-guarantees}, along with the proof. The main implication of this result is that our proposed estimator supports valid pre-deployment inference, even when the nuisance functions are estimated with flexible machine learning methods. In particular, the conditional loss model $\mu$, the pooled labeled-source propensity $\pi$, and the domain propensity $g$ need not be estimated at parametric rates. It suffices that their errors are small enough for the second-order remainder to be negligible; a sufficient condition is $o_p(n^{-1/4})$ convergence in $L_2(P)$ for each nuisance function. 

The theorem also highlights why the influence function correction is essential for $\sqrt{n}$ inference in our setting. Na\"{\i}ve plug-in or IPW estimators generally inherit first-order bias from errors in $\widehat{\mu}$, $\widehat{\pi}$, or $\widehat{g}$, making valid inference difficult. By contrast, the DML correction removes first-order sensitivity to nuisance estimation error: after sample splitting, the remaining bias is governed by products of nuisance errors. 

The asymptotic linear expansion in \eqref{eqn:estimator_asymptotic_linearity} also yields a practical uncertainty quantification procedure. Since the leading term is the average of the influence scores $\varphi(Z_i; P)$, the asymptotic variance can be estimated by the empirical variance of the estimated influence scores. This enables the construction of standard Wald-type confidence intervals for the target risk. Therefore, the theorem justifies the use of modern prediction methods to estimate the nuisance components while retaining classical $\sqrt{n}$ inference for the target risk.
\section{Experiments}
We present two sets of clinical experiments. First, we perform a semi-synthetic experiment with real-world covariates and simulated outcome and label-observability mechanisms. This setting preserves realistic covariate shifts, which are known to be more complex and difficult to address than shifts simulated in controlled, synthetic experiments \citep{koh2021wilds}. Simulating the label and outcome mechanisms enables comparison of each estimator to the \emph{true} target risk. Second, we evaluate the estimator on real-world clinical outcomes, where both label observability and outcomes are taken from observed eICU data. Synthetic experiments are deferred to \Cref{sec:synthetic_experiments}. 
The eICU Collaborative Research Database \citep{pollard2018eicu} includes  de-identified individual-level electronic health records from over 200,000 admissions to ICUs across multiple hospitals in the United States. We use admission-level patient demographics, vitals, clinical unit type, and hospital ID. 


\subsection{Real-world Covariate Shifts} We leverage the fact that the data include multiple hospitals to capture the dynamics of real-world distribution shifts where a model is trained on a population that differs in demographic makeup from the population on which it is deployed. We identify 3 candidate target hospitals that look systematically different from the general population in age, race, and ethnicity and one baseline source hospital with demographics that are representative of the overall eICU sample. We train a model on the source hospital, then evaluate its performance in each candidate, covariate-shifted target hospital. 

In particular, our covariate-shifted hospitals include hospital 443 (which tends to have younger patients and more African American patients), hospital 199 (with a typical age profile and more Caucasian patients), hospital 283 (which skews older with a larger share of patients labeled as unknown or ``other'' ethnicity). We use hospital 208 as a no-covariate-shift benchmark since it is approximately average across these demographics. We illustrate these covariate shifts in  Figure~\ref{fig:eicu_age_eth_by_hospital}.

\subsection{Semi-Synthetic Design}
We generate outcomes from a sparse logistic model in which older patients have a higher outcome risk. We then train a fixed prediction rule $f$ on simulated labels from the source training split and hold this rule fixed throughout. We simulate label observability for source patients. The probability that a source label is observed depends on patient covariates, where the selection mechanism is chosen so that selection strength values increasingly distort label observability away from older patients. We vary feature-dependence in the selection mechanism while holding the overall labeling rate approximately constant. As a result, the experiment varies the degree of feature dependence in the selective labeling mechanism while holding the amount of labeled source data roughly constant. The target hospital outcomes are treated as unobserved by the estimators, which matches the pre-deployment evaluation setting in which the evaluator accesses unlabeled target covariates but not target labels. See \Cref{sec:semi-synthetic_experiment_details} for details.


\subsection{Estimators}

For each selection strength value, we compare our DML estimator \eqref{eqn:dml_estimator} to three baselines. First, a natural benchmark is the \emph{Plug-in Estimator} that reweighs observed labeled source losses by the estimated density ratio $\widehat{w}(x)$ and the inverse propensity weights $1/\widehat{\pi}(x)$ to account for both covariate shift and selective labels. 
Second, we compare against a recent approach for evaluating models under covariate shift using double machine learning that we denote \emph{Covariate Shift (CS)-only} \citep{morrison2024robust}.
Finally, we compare against a \emph{Selective Labels (SL)-only} estimator from \citet{coston2020counterfactual} that uses double machine learning to address selective labels. 
The \emph{CS-only} and \emph{SL-only} baselines represent leading approaches for their respective settings: each uses double machine learning to leverage flexible nuisance estimation, but each addresses only one of the two challenges considered here. See \Cref{sec:estimator_details} for more details. 


\subsection{Nuisance Estimation}

We estimate three nuisance components using cross-fitting: (i) a domain classifier for covariate shift reweighting, (ii) a label observability propensity for selective labels, and (iii) an outcome regression model for the conditional expected loss.

First, we estimate the domain propensity $g(x)$ for covariate shift reweighting via logistic regression on the pooled source and target covariates. Second, we estimate the source label observability propensity $\pi_S(x)$ via logistic regression. Then, noting that $\pi(X) = (1 - g(X)) \cdot \pi_S(X)$, we estimate the source outcome-labeling propensity via $\widehat{\pi}(x) = (1-\widehat{g}(x)) \widehat{\pi}_S(x)$. Finally, we estimate the conditional loss $\mu(x)$ by noting that for absolute loss with binary outcomes, we can write $\mu(x) = \eta(x) |1-f(x)| + (1-\eta(x))|f(x)|$ where $\eta(x) \coloneq \PP(Y = 1|X=x, R = 1, D = 1)$. We use a cross-fitted random forest regression model to estimate $\eta(x)$ with labeled source units, then use this estimate to construct $\widehat{\mu}(x)$.  


\subsection{Evaluation and Uncertainty}

To evaluate each estimator, we compute a Monte Carlo estimate of the ground-truth target risk by simulating an oracle dataset of target samples. For each target covariate vector, we generate an outcome using the same outcome model as the experiment. The ground-truth risk estimate is then computed as the mean absolute error (MAE) between the model prediction and the oracle outcomes. We evaluate each estimator by comparing its estimated target risk to the oracle estimate of ground truth.
We report uncertainty via 95\% bootstrap confidence intervals. While our proposed DML estimator is an asymptotically linear estimator of the target risk under the conditions of \Cref{thrm:estimator-guarantees_short}, the same need not be true in general for the other estimators. Consequently, we acknowledge that the bootstrap intervals may be inaccurate for such estimators; we nevertheless use the same uncertainty quantification procedure across estimators to provide a consistent comparison. 


\subsection{Semi-Synthetic Results}

\Cref{fig:semisynthetic} reports the bias and RMSE of each estimator relative to the ground-truth target risk as the feature-dependence of the selective labeling mechanism increases on the x-axis. Across the shifted target hospitals, our DML estimator exhibits the least bias. The CS-only estimator is positively biased in all target hospitals, with bias increasing as selection becomes more feature-dependent. 

As expected, the SL-only estimator is approximately unbiased in the no-shift case (hospital 208), but exhibits bias in the other shifted settings. The plug-in estimator also exhibits significant bias. Together, these results show that methods addressing only one source of bias, as well as na\"{\i}ve attempts to combine corrections, can systematically misrepresent deployment performance when covariate shift and selective labels occur simultaneously.

The RMSE results show that our DML method consistently has among the lowest RMSE. In some settings (hospital 443), CS-only performs comparably to our method, which has significantly lower error than the plug-in and SL-only estimators. In others (hospital 199), SL-only yields similar error to our method whereas the CS-only and plug-in estimators exhibit significantly higher error. For hospital 283, due to wide uncertainty intervals, we cannot determine which method has the lowest error. The RMSE results suggest that our DML estimator provides a reliable estimate of performance under covariate shift and selective labels.

Overall, the combined bias and RMSE results illustrate that, when covariate shift and selective labeling are both present, correcting for either source of bias in isolation can give a misleading estimate of deployment risk.

\begin{figure}[t]
    \centering

    \setlength{\fboxsep}{0.14em}
    \setlength{\fboxrule}{0.2pt}

    \fcolorbox{darkgray}{white}{%
    \begin{minipage}[t]{0.225\textwidth}
        \centering
        \vspace{0pt}
        \small{Hospital 443}

        \vspace{0.2em}

        \refstepcounter{panelgroup}
        \setcounter{subfigure}{0}

        \subcaptionbox{Bias\label{fig:semisynthetic-443-bias}}[\linewidth]{%
        \makebox[\linewidth][c]{%
            \resizebox{!}{0.72\linewidth}{\input{figs/semisynthetic/eicu_semi/semisynthetic_increasing_selection_strength_with_cov_shift_target443_bias_from_risk.tikz}}%
        }%
        }

        \vspace{0.2em}

        \subcaptionbox{RMSE\label{fig:semisynthetic-443-rmse}}[\linewidth]{%
        \makebox[\linewidth][c]{%
            \resizebox{!}{0.72\linewidth}{\input{figs/semisynthetic/eicu_semi/semisynthetic_increasing_selection_strength_with_cov_shift_target443_rmse_from_risk.tikz}}%
        }%
        }
    \end{minipage}%
    }
    \hfill
    \fcolorbox{darkgray}{white}{%
    \begin{minipage}[t]{0.225\textwidth}
        \centering
        \vspace{0pt}
        \small{Hospital 199}

        \vspace{0.2em}

        \refstepcounter{panelgroup}
        \setcounter{subfigure}{0}

        \subcaptionbox{Bias\label{fig:semisynthetic-199-bias}}[\linewidth]{%
        \makebox[\linewidth][c]{%
            \resizebox{!}{0.72\linewidth}{\input{figs/semisynthetic/eicu_semi/semisynthetic_increasing_selection_strength_with_cov_shift_target199_bias_from_risk.tikz}}%
        }%
        }

        \vspace{0.2em}

        \subcaptionbox{RMSE\label{fig:semisynthetic-199-rmse}}[\linewidth]{%
        \makebox[\linewidth][c]{%
            \resizebox{!}{0.72\linewidth}{\input{figs/semisynthetic/eicu_semi/semisynthetic_increasing_selection_strength_with_cov_shift_target199_rmse_from_risk.tikz}}%
        }%
        }
    \end{minipage}%
    }
    \hfill
    \fcolorbox{darkgray}{white}{%
    \begin{minipage}[t]{0.225\textwidth}
        \centering
        \vspace{0pt}
        \small{Hospital 283}

        \vspace{0.2em}

        \refstepcounter{panelgroup}
        \setcounter{subfigure}{0}

        \subcaptionbox{Bias\label{fig:semisynthetic-283-bias}}[\linewidth]{%
        \makebox[\linewidth][c]{%
            \resizebox{!}{0.72\linewidth}{\input{figs/semisynthetic/eicu_semi/semisynthetic_increasing_selection_strength_with_cov_shift_target283_bias_from_risk.tikz}}%
        }%
        }

        \vspace{0.2em}

        \subcaptionbox{RMSE\label{fig:semisynthetic-283-rmse}}[\linewidth]{%
        \makebox[\linewidth][c]{%
            \resizebox{!}{0.72\linewidth}{\input{figs/semisynthetic/eicu_semi/semisynthetic_increasing_selection_strength_with_cov_shift_target283_rmse_from_risk.tikz}}%
        }%
        }
    \end{minipage}%
    }
    \hfill
    \fcolorbox{darkgray}{white}{%
    \begin{minipage}[t]{0.225\textwidth}
        \centering
        \vspace{0pt}
        \small{Hospital 208 (no shift)}

        \vspace{0.2em}

        \refstepcounter{panelgroup}
        \setcounter{subfigure}{0}

        \subcaptionbox{Bias\label{fig:semisynthetic-208-bias}}[\linewidth]{%
        \makebox[\linewidth][c]{%
            \resizebox{!}{0.72\linewidth}{\input{figs/semisynthetic/eicu_semi/semisynthetic_increasing_selection_strength_no_cov_shift_bias_from_risk.tikz}}%
        }%
        }

        \vspace{0.2em}

        \subcaptionbox{RMSE\label{fig:semisynthetic-208-rmse}}[\linewidth]{%
        \makebox[\linewidth][c]{%
            \resizebox{!}{0.72\linewidth}{\input{figs/semisynthetic/eicu_semi/semisynthetic_increasing_selection_strength_no_cov_shift_rmse_from_risk.tikz}}%
        }%
        }
    \end{minipage}%
    }

    \vspace{0.2em}

    \begin{minipage}{\textwidth}
        \centering
        \scalebox{0.7}{\input{figs/semisynthetic/semisynthetic_legend}}
    \end{minipage}

    \caption{\textbf{Semi-synthetic eICU selection strength experiment.} Each column corresponds to a target hospital: 443, 199, and 283 represent covariate-shifted deployment populations, while 208 is a no-shift benchmark. Colors denote different evaluation methods that estimate the risk of a fixed prediction rule trained on source hospital 208. Outcomes and labeling are simulated such that risk is higher and label observability lower for older patients as the selection strength parameter on the horizontal axis increases. The top row reports bias and the bottom row RMSE relative to ground-truth target risk. Our DML estimator is the least biased (closest to zero) across the shifted hospitals (443, 199, and 283). Our estimator also achieves among the lowest RMSE across hospitals. Methods addressing only one source of bias (SL-only and CS-only), as well as na\"{\i}ve attempts to combine corrections (plug-in), can misrepresent deployment performance when covariate shift and selective labels jointly occur. Points denote means, and vertical bars denote 95\% bootstrapped confidence intervals. }
    \label{fig:semisynthetic}
\end{figure}


\subsection{Real-world eICU Experiments}

We next illustrate our estimator in a fully real-world setting using eICU data. As above, we use hospital membership to observe naturally occurring covariate shift. However, now the labeling mechanism and the outcomes are taken directly from the observed clinical data. 

We define binary outcomes from laboratory measurements obtained during the first 12--24 hours of ICU admission. Each outcome corresponds to a clinically meaningful threshold value that reflects an underlying complication, such as a heart condition. The prediction model uses patient demographics, admission characteristics, and ICU unit types as covariates. We train a fixed logistic regression prediction rule on labeled observations from hospital 208. We estimate its target risk in three covariate-shifted hospitals, 443, 199, and 283.

The key distinction from the semi-synthetic experiments is that the selective labeling is no longer simulated. Instead, label observability is based on real clinical decision-making: we only observe a given outcome for patients whose clinicians ordered the relevant lab in a specified admission window. Thus, the labeling indicator captures real variation in diagnostic and monitoring practices across hospitals, and, hence, heterogeneity in the selective labeling mechanism. At the same time, hospital membership captures naturally occurring covariate shifts in patient severity, characteristics, and demographics.

We do not observe labels for patients whose labs are not ordered. Thus, we cannot report bias or RMSE against an oracle risk as in the semi-synthetic setting. Instead, we report estimated target risks and interpret the implications of using each estimator in a clinical context.

We evaluate our estimator on four lab-dependent clinical outcomes in the eICU data: indicators of elevated B-type natriuretic peptide (BNP), elevated troponin-I, elevated creatine kinase (CK), and hypoxemia. For each outcome, the label is observed only among ICU patients for whom the relevant laboratory test was ordered within a given time window (24 hours for BNP, troponin-I, CK, and 12 hours for hypoxemia). This observability mechanism exactly reflects the real-world selective labels problem where the outcome is only observed when a clinician orders the corresponding diagnostic test. See \Cref{sec:nonsynthetic_experiments_details} for further discussion on these outcomes. 

\subsection{Real-world Results }

\Cref{fig:eicu_realworld_selection} reports target-risk estimates for the selectively-labeled clinical outcomes across target hospitals and a benchmark no-shift hospital. First, we observe that for some outcomes (BNP, hypoxemia), the estimated target risk changes substantially across hospitals while, for others (troponin-I, CK), the estimates are more stable. 

In some instances, the DML, CS-only, and SL-only estimators approximately agree (e.g., BNP 208, troponin-I 199, CK 208 and 283, hypoxemia 283). In others, however, the DML aligns with the CS-only estimator and diverges from the SL-only estimator (e.g., hypoxemia 199), indicating that the covariate shift dominates the risk adjustment. At the same time, the instances of disagreement between the CS-only and DML estimators (e.g., CK 443, hypoxemia 443 and 283) reflect that accounting for selection can meaningfully change the practitioner's expectation of and confidence about the risk of deploying a clinical model in a new hospital. 

We also note the relatively wide intervals for the plug-in estimator across hospitals and outcomes. This underscores the efficiency gains offered by our DML estimator over the na\"{\i}ve direct inverse weighting approach, as reflected in both \Cref{thrm:estimator-guarantees_short} and the semi-synthetic outcome experiments.

\begin{figure}[t]
    \centering

    \begin{subfigure}{0.24\textwidth}
        \centering
        \resizebox{\textwidth}{!}{\input{figs/realoutcomes/bnp}}
        \caption{BNP}
    \end{subfigure}
    \hfill
    \begin{subfigure}{0.24\textwidth}
        \centering
        \resizebox{\textwidth}{!}{\input{figs/realoutcomes/troponin}}
        \caption{Troponin-I}
    \end{subfigure}
    \hfill
    \begin{subfigure}{0.24\textwidth}
        \centering
        \resizebox{\textwidth}{!}{\input{figs/realoutcomes/ck}}
        \caption{Creatine Kinase}
    \end{subfigure}
    \hfill
    \begin{subfigure}{0.24\textwidth}
        \centering
        \resizebox{\textwidth}{!}{\input{figs/realoutcomes/hypoxemia}}
        \caption{Hypoxemia}
    \end{subfigure}

    \vspace{0.2em}

    \begin{minipage}{\textwidth}
        \centering
        \scalebox{0.7}{\input{figs/realoutcomes/real_outcomes_legend}}
    \end{minipage}

    \caption{\textbf{Real-world eICU target-risk estimates.} Each panel reports the estimated target MAE for one clinical outcome.  Hospital 208 is the no-shift benchmark; hospitals 443, 199, and 283 are covariate-shifted targets. We evaluate a fixed model trained on labeled data from hospital 208. Labels are observed only when the corresponding tests are ordered. The source baseline (\graytriangle) reports the unadjusted observed source risk. Points denote point estimates and vertical bars denote 95\% bootstrap confidence intervals computed by resampling the source and target evaluation samples. Across tasks and hospitals, our DML estimator yields different point estimates and uncertainty intervals than baseline methods. The source and SL-only estimators significantly underestimate the MAE for BNP on hospitals 443 and 199 and for hypoxemia on hospital 199 relative to our DML estimate. The CS-only estimate reports lower uncertainty than our DML estimate for creatine kinase and hypoxemia on hospital 443. These discrepancies underscore the practical significance of a method that jointly addresses covariate shift and selective labels.}
    \label{fig:eicu_realworld_selection}
\end{figure}


\section{Robustness Checks}\label{sec:robustness}
The identification result in \Cref{prop:identification_target_risk} relies on selection on observables, covariate shift, positivity, and bounded source-target overlap (\Cref{assumption:no_unobserved_confounding}- \Cref{assumption:bounded_likelihood}).  We therefore run an additional set of controlled synthetic experiments to evaluate how each estimator behaves as these assumptions are increasingly violated. 

In each experiment, we perturb a single assumption at a time while holding fixed the rest of the data generating process. We consider four violations: (i) the presence of an unobserved variable that jointly affects label observability and outcomes, (ii) conditional outcome distribution shift, (iii) loss of label positivity in a tail region of the source distribution, and  (iv) increasing source-target likelihood ratio instability. Full details of the perturbation constructions are given in \Cref{sec:robustness_check_details}. 

\Cref{fig:robustness} reports bias and RMSE as the severity of each assumption's violation increases. When selection (and, therefore, label observability) depend on an unobserved variable that also affects outcomes, all estimators become increasingly biased; The DML and plug-in estimators perform best with respect to both bias and RMSE for mild to moderate violations, while larger violations eventually degrade all estimators. Under conditional outcome shift in the target distribution (i.e., where $P_S(Y | X) \neq P_T(Y | X)$), all estimators become increasingly biased; the DML and plug-in estimators perform similarly, and outperform all other estimators across the parameter range. Under positivity stress, the DML, plug-in, and CS-only estimators remain unbiased, whereas the SL-only estimator becomes increasingly biased. The DML and CS-only estimators also have the lowest RMSE. Finally, under likelihood ratio stress, the DML estimator remains the least biased and has the lowest RMSE across the parameter range. 

Overall, these robustness checks suggest that our proposed estimator is reasonably robust to positivity and likelihood ratio stress. Moreover, while the DML estimator exhibits increasing bias under selection on an unobserved variable and under conditional outcome shift, it is competitive with respect to bias and RMSE in comparison to all other benchmark estimators. 

\begin{figure}[t]
    \centering

    \setlength{\fboxsep}{0.14em}
    \setlength{\fboxrule}{0.2pt}

    \fcolorbox{darkgray}{white}{%
    \begin{minipage}[t]{0.225\textwidth}
        \centering
        \vspace{0pt}
        \parbox[c][2.2em][c]{\linewidth}{\centering\small Selection on\\Observables}

        \vspace{0.2em}

        \refstepcounter{panelgroup}
        \setcounter{subfigure}{0}

        \vspace{0.15em}
        \subcaptionbox{Bias\label{fig:robustness_selection_on_observables_bias}}[\linewidth]{%
        \makebox[\linewidth][c]{%
            \resizebox{!}{0.72\linewidth}{\input{figs/robustness/selection_on_observables_violation_bias.tikz}}%
        }%
        }

        \vspace{0.25em}

        \subcaptionbox{RMSE\label{fig:robustness_selection_on_observables_rmse}}[\linewidth]{%
        \makebox[\linewidth][c]{%
            \resizebox{!}{0.72\linewidth}{\input{figs/robustness/selection_on_observables_violation_rmse.tikz}}%
        }%
        }
    \end{minipage}%
    }
    \hfill
    \fcolorbox{darkgray}{white}{%
    \begin{minipage}[t]{0.225\textwidth}
        \centering
        \vspace{0pt}
        \parbox[c][2.2em][c]{\linewidth}{\centering\small Covariate\\Shift}

        \vspace{0.1em}

        \refstepcounter{panelgroup}
        \setcounter{subfigure}{0}

        \subcaptionbox{Bias\label{fig:robustness_covshift_bias}}[\linewidth]{%
        \makebox[\linewidth][c]{%
            \resizebox{!}{0.72\linewidth}{\input{figs/robustness/covariate_shift_violation_bias.tikz}}%
        }%
        }

        \vspace{0.1em}

        \subcaptionbox{RMSE\label{fig:robustness_covshift_rmse}}[\linewidth]{%
        \makebox[\linewidth][c]{%
            \resizebox{!}{0.72\linewidth}{\input{figs/robustness/covariate_shift_violation_rmse.tikz}}%
        }%
        }
    \end{minipage}%
    }
    \hfill
    \fcolorbox{darkgray}{white}{%
    \begin{minipage}[t]{0.225\textwidth}
        \centering
        \vspace{0pt}
        \parbox[c][2.2em][c]{\linewidth}{\centering\small Positivity}

        \vspace{0.25em}

        \refstepcounter{panelgroup}
        \setcounter{subfigure}{0}

        \vspace{0.4em}
        \subcaptionbox{Bias\label{fig:robustness_positivity_bias}}[\linewidth]{%
        \makebox[\linewidth][c]{%
            \resizebox{!}{0.72\linewidth}{\input{figs/robustness/positivity_violation_bias.tikz}}%
        }%
        }

        \vspace{0.4em}

        \subcaptionbox{RMSE\label{fig:robustness_positivity_rmse}}[\linewidth]{%
        \makebox[\linewidth][c]{%
            \resizebox{!}{0.72\linewidth}{\input{figs/robustness/positivity_violation_rmse.tikz}}%
        }%
        }
    \end{minipage}%
    }
    \hfill
    \fcolorbox{darkgray}{white}{%
    \begin{minipage}[t]{0.225\textwidth}
        \centering
        \vspace{0pt}
        \parbox[c][2.2em][c]{\linewidth}{\centering\small Bounded\\Likelihood Ratio}

        \vspace{0.5em}

        \refstepcounter{panelgroup}
        \setcounter{subfigure}{0}

        \subcaptionbox{Bias\label{fig:robustness_lr_bias}}[\linewidth]{%
        \makebox[\linewidth][c]{%
            \resizebox{!}{0.72\linewidth}{\input{figs/robustness/bounded_likelihood_ratio_violation_bias.tikz}}%
        }%
        }

        \vspace{0.25em}

        \subcaptionbox{RMSE\label{fig:robustness_lr_rmse}}[\linewidth]{%
        \makebox[\linewidth][c]{%
            \resizebox{!}{0.72\linewidth}{\input{figs/robustness/bounded_likelihood_ratio_violation_rmse.tikz}}%
        }%
        }
    \end{minipage}%
    }

    \vspace{0.2em}

    \begin{minipage}{\textwidth}
        \centering
        \scalebox{0.7}{\input{figs/semisynthetic/semisynthetic_legend}}
    \end{minipage}

    \caption{\textbf{Robustness checks to violations of identifying assumptions.} Each column reports the results of a controlled violation of an identifying assumption. Within each column, the top panel reports Monte Carlo bias relative to the oracle target risk and the bottom panel reports RMSE. The horizontal axis in each column is the corresponding violation parameter for that assumption, with larger values corresponding to more severe violations. }
    \label{fig:robustness}
\end{figure}
\section{Conclusion}
This paper proposes a new framework for pre-deployment evaluation that, to our knowledge, is the first to account jointly for covariate shift and selective labels. Our approach formalizes target risk as the model's expected deployment performance, establishes conditions for its identification under observed data, and derives an influence-function-based double machine learning estimator. We prove that the proposed estimator is asymptotically linear, which yields asymptotic normality and supports Wald-type inference. In semi-synthetic eICU experiments, our estimator more accurately tracks target risk than standard plug-in procedures and evaluation procedures designed for each problem in isolation.

These results highlight the importance of developing tools that can account for multiple coexisting data challenges. In particular, the combination of covariate shift and selective labels, each of which has been studied extensively in isolation, poses compounded difficulties and is likely to arise in high-stakes domains such as healthcare.

These results come with important qualifications. Identification relies on substantive assumptions, including selection on observables and sufficient source-target overlap. These may fail, for instance, when clinical testing decisions depend on unmeasured severity indicators, or when target patients lie in regions of the covariate space that are impossible to observe in labeled source samples. Our robustness checks in \Cref{sec:robustness} suggest some tolerance to small violations, but future work could develop methods that relax these assumptions or use adaptive data collection to improve overlap.

Accounting for co-occurring data challenges is especially important in fairness and safety assessments.
By applying our method to subgroup risks, evaluators can assess fairness or safety properties under deployments characterized by covariate shift and selective labeling.

Finally, our framework has broader applications beyond clinical prediction. Many foundation model and LLM evaluation settings involve selectively observed feedback while the deployment distribution of prompts, users, and downstream tasks often differs substantially from that of the training and testing. Extending our approach to these settings could support more principled evaluation of modern ML systems under the real-world distortions that often arise together. 

\section{Acknowledgments}
We are grateful to Edward Kennedy for insightful comments. Annie Ulichney's work is supported in part by the National Science Foundation Graduate Research Fellowship Program under Grant No. DGE 2146752 and in part by the Cooperative AI PhD Fellowship from the Cooperative AI Foundation. Views and opinions expressed are however those of the author(s) only and do not necessarily reflect those of the National Science Foundation. 
\bibliography{references}
\appendix
\section{Proofs}\label{appendix:proofs}

\subsection{Proof of Proposition~\ref{prop:identification_target_risk}}\label{proof:identification_target_risk}
Recall the notation $L \coloneq \ell(f(X), Y)$. By law of iterated expectations, 
\begin{equation}\label{eq:identification_proof_1}
    \psi = \EE_T\left[ L  \right] = \EE_T \left[ \EE\left[ L | R = 0, X \right]\right].
\end{equation}
By Assumption~\ref{assumption:cov_shift}, 
\begin{equation}\label{eq:identification_proof_2}
    \EE_T \left[ \EE\left[ L | R = 0, X \right]\right] = \EE_T \left[ \EE\left[ L | R = 1, X \right]\right].
\end{equation}
By Assumption~\ref{assumption:no_unobserved_confounding}, 
\begin{equation}\label{eq:identification_proof_3}
    \EE_T \left[ \EE\left[ L | R = 1, X \right]\right] = \EE_T \left[ \EE\left[ L | R = 1, D = 1, X \right]\right].
\end{equation}
Combining \eqref{eq:identification_proof_1}-\eqref{eq:identification_proof_3} and recalling the definition of $\mu$ yields the first representation 
\begin{equation}\label{eq:identification_proof_4}
    \psi = \EE_T\left[ \mu(X) \right]. 
\end{equation}

To show the second representation, we start from \eqref{eq:identification_proof_4}. Using the change of measure justified by Assumption~\ref{assumption:bounded_likelihood}, we obtain: 
\begin{equation}\label{eq:identification_proof_5}
    \psi = \int_{x \in \mathcal{X}} \mu(x) p_T(x) dx = \int_{x \in \mathcal{X}} \mu(x) \frac{p_T(x)}{p_S(x)} p_S(x)dx  = \EE_S\left[ \frac{p_T(X)}{p_S(X)} \mu(X) \right].
\end{equation}
Under the source law, conditioning on $R = 1$ is implicit by construction. Hence, 
\begin{equation}\label{eq:identification_proof_6}
    \mu(X)=\EE[L \mid X, R=1, D=1]=\EE_S[L \mid X, D=1]. 
\end{equation}
Since $D \in \{0, 1\}$, we have 
\begin{equation}\label{eq:identification_proof_7}
    \EE_S\left[ D L | X \right] = \PP(D = 1 | X, R = 1) \EE_S[L | X, D = 1] = \pi_S(X) \mu(X). 
\end{equation}
By Assumption~\ref{assumption:positivity}, rearrange \eqref{eq:identification_proof_7} to get 
\begin{equation}\label{eq:identification_proof_8}
    \mu(X) = \frac{\EE_S\left[ D L | X \right] }{\pi_S(X)}.
\end{equation}
Substituting \eqref{eq:identification_proof_8} into \eqref{eq:identification_proof_5} yields
\begin{equation*}
    \psi =   \EE_S\left[ \frac{p_T(X)}{p_S(X)} \frac{\EE_S\left[ D L | X \right]}{\pi_S(X)} \right].
\end{equation*}
Applying the tower property yields the representation
\begin{equation*}
    \psi =   \EE_S\left[ \frac{p_T(X)}{p_S(X)} \frac{ D L }{\pi_S(X)} \right]
\end{equation*}
which proves the claim.
$\qed$

\subsection{Candidate Influence Function Derivation}


The following lemma recalls well-known results characterizing the influence functions of conditional expectation and density functions. See, e.g., \cite{kennedy2024semiparametric}. 

\begin{lemma}[Auxiliary Influence Functions]\label{lemma:building_block_if}
     For the conditional loss function $\mu(x)$, its influence function $\IF{\mu(X)}$ is given by:
    \begin{equation}\label{eq:if_mu}
        \IF {\mu(x)} = \frac{D \cdot R \cdot \mathds{1}\{X = x\}}{\PP(X=x, R = 1, D=1)} \left( L - \mu(x) \right).
    \end{equation}
    Similarly, for the target covariate density $p_T(x)$, its influence function is given by:
    \begin{equation}\label{eq:if_pt}
        \IF{p_T(x)} = \frac{1}{\rho} (1 - R) \left( \mathds{1}\{X = x\} - p_T(x) \right).
    \end{equation}
\end{lemma}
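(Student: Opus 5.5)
We will derive both formulas in the discrete-covariate setting, using the Gateaux-derivative / point-mass contamination heuristic from \citet{kennedy2024semiparametric}. The influence function of a functional is its pathwise derivative along the submodel $\PP_t = (1-t)\PP + t\,\delta_{\tilde z}$ at $t=0$, where $\tilde z = (\tilde x, \tilde r, \tilde r\tilde d, \tilde r\tilde d\tilde y)$. Equivalently, we can apply the usual calculus: the product and quotient rules, together with the basic fact that $\IF{\PP(A)} = \mathds{1}\{A\} - \PP(A)$ for any event $A$ observable in $Z$.

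For \eqref{eq:if_mu}, we first write $\mu(x)$ as a ratio of observed-data quantities:
\[
\mu(x) = \frac{\EE[L\,RD\,\mathds{1}\{X=x\}]}{\PP(X=x,R=1,D=1)}.
\]
This is well defined on the observed data because $L$ only enters multiplied by $RD$. The numerator is a mean, so its influence function is $L\,RD\,\mathds{1}\{X=x\} - \EE[L\,RD\,\mathds{1}\{X=x\}]$. The denominator is a probability, so its influence function is $RD\,\mathds{1}\{X=x\} - \PP(X=x,R=1,D=1)$. The quotient rule then gives
\[
\IF{\mu(x)} = \frac{\IF{\text{num}} - \mu(x)\,\IF{\text{den}}}{\PP(X=x,R=1,D=1)}.
\]
In the difference the centering constants cancel, because $\EE[L\,RD\,\mathds{1}\{X=x\}] = \mu(x)\PP(X=x,R=1,D=1)$. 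What remains is $RD\,\mathds{1}\{X=x\}(L-\mu(x))/\PP(X=x,R=1,D=1)$, which is \eqref{eq:if_mu}.

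For \eqref{eq:if_pt}, we write $p_T(x) = \PP(X=x,R=0)/\PP(R=0)$. The numerator has influence function $(1-R)\mathds{1}\{X=x\} - \PP(X=x,R=0)$, and $\rho$ has influence function $(1-R) - \rho$. The quotient rule gives
\[
\IF{p_T(x)} = \frac{(1-R)\mathds{1}\{X=x\} - \PP(X=x,R=0) - p_T(x)\big((1-R)-\rho\big)}{\rho}.
\]
Since $\PP(X=x,R=0) = \rho\, p_T(x)$, the constant terms cancel. This leaves $(1-R)(\mathds{1}\{X=x\} - p_T(x))/\rho$, which is \eqref{eq:if_pt}.

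The only delicate step is justifying the discrete-$X$ calculus. As is standard, we will treat these formulas as heuristic building blocks: in the continuous case the indicators are read as point masses or kernel limits. The candidate influence function they produce for $\psi$ is then rigorously validated by checking the von Mises expansion in \Cref{prop:eif}, so this step needs no separate regularity argument here. The remaining conditions are $\PP(X=x,R=1,D=1)>0$ and $\rho>0$, which follow from Assumption~\ref{assumption:positivity} and \eqref{eqn:rho_bound}.
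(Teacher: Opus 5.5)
Your derivation is correct and is the standard discrete-$X$ quotient-rule argument from \citet{kennedy2024semiparametric}, which the paper cites in place of a proof; the identities $\EE[L\,RD\,\mathds{1}\{X=x\}]=\mu(x)\PP(X=x,R=1,D=1)$ and $\PP(X=x,R=0)=\rho\,p_T(x)$ make the centering constants cancel exactly as you describe. One small correction: for points $x$ in the target support, $\PP(X=x,R=1,D=1)>0$ does not follow from Assumption~\ref{assumption:positivity} and \eqref{eqn:rho_bound} alone; you also need Assumption~\ref{assumption:bounded_likelihood}, so that $p_T(x)>0$ implies $p_S(x)>0$, just as the paper does when deriving \eqref{eqn:pooled_positivity}.
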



\begin{lemma}[Target Risk Influence Function]\label{lemma:influence_function}
    Define 
    \begin{equation}\label{eqn:eif_proof}
        \varphi(Z; \PP) = \frac{RD}{\pi(X)} \frac{g(X)}{\rho} (L - \mu(X)) + \frac{1 - R}{\rho} ( \mu(X) - \psi(\PP)).  
    \end{equation}
    Then $\EE_\PP\left[ \varphi(Z; \PP) \right] = 0$ and, for every one-dimensional parametric sub-model $\PP_\varepsilon = (1 - \varepsilon)\cdot \PP + \varepsilon \overline{\PP}$ with score function $s_\varepsilon$, 
    \begin{equation*}
        \frac{\partial}{\partial \varepsilon} \psi(\PP_\varepsilon)\big|_{\varepsilon = 0} = \EE_{\PP}\left[ \varphi(Z; \PP) s_\varepsilon(Z) \right].
    \end{equation*}
    That is, $\varphi(\cdot; \PP)$ is an influence function for $\psi$.
\end{lemma}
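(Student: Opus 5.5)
We prove the two claims in the order stated: first mean-zero, then the pathwise-derivative identity. Throughout we work from the first representation in \Cref{prop:identification_target_risk}, written over the pooled law as
\[
\psi(\PP) = \frac{1}{\rho}\,\EE\bigl[(1-R)\,\mu(X)\bigr] = \int \mu(x)\,p_T(x)\,dx,
\]
where $\mu$ is regarded as a functional of $\PP$ and $p_T(x)=p(x)g(x)/\rho$.

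\emph{Mean zero.} Condition the first term of $\varphi$ on $X$. Since $\EE[RD\,L\mid X] = \pi(X)\mu(X)$ (because $\mu(X)=\EE[L\mid X,R=1,D=1]$) and $\EE[RD\mid X]=\pi(X)$, we get $\EE[RD(L-\mu(X))\mid X]=0$. Positivity (\Cref{assumption:positivity}) together with $g<1$ makes $\pi$ bounded away from zero wherever $g(X)>0$, so the weights $g(X)/(\rho\,\pi(X))$ are finite. Hence the first term has mean zero. For the second term, $\EE[(1-R)\mu(X)]/\rho=\EE_T[\mu(X)]=\psi$ and $\EE[1-R]/\rho=1$, so it also has mean zero.

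\emph{Pathwise derivative.} Along $\PP_\varepsilon$, apply the product rule to $\psi(\PP_\varepsilon)=\int \mu_\varepsilon(x)\,p_{T,\varepsilon}(x)\,dx$. This gives
\[
\partial_\varepsilon\psi = \int \partial_\varepsilon\mu_\varepsilon(x)\,p_T(x)\,dx + \int \mu(x)\,\partial_\varepsilon p_{T,\varepsilon}(x)\,dx .
\]
We evaluate each piece using \Cref{lemma:building_block_if}, through the "point-mass" heuristic $\partial_\varepsilon\theta = \EE[\IF{\theta}\,s]$, equivalently by computing the Gateaux derivative toward $\overline{\PP}$. (A fully rigorous route is to write $\mu=\EE[RDL\mid X]/\EE[RD\mid X]$ and differentiate the densities directly; the result is the same.)

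For the $\mu$-piece, integrate \eqref{eq:if_mu} against $p_T(x)$. The factor $\mathds 1\{X=x\}$ collapses the integral to $x=X$, leaving
\[
\frac{RD\,p_T(X)}{\PP(X, R=1,D=1)}\bigl(L-\mu(X)\bigr).
\]
Since $\PP(X,R=1,D=1)=p(X)\pi(X)$ and $p_T(X)/p(X)=g(X)/\rho$, the ratio simplifies to $\frac{RD}{\pi(X)}\frac{g(X)}{\rho}$, which recovers the first term of $\varphi$.

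For the $p_T$-piece, integrate \eqref{eq:if_pt} against $\mu(x)$ to obtain $\frac{1-R}{\rho}\bigl(\mu(X)-\psi\bigr)$, the second term of $\varphi$.

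Summing the two pieces gives $\partial_\varepsilon\psi(\PP_\varepsilon)|_{\varepsilon=0}=\EE[\varphi\,s_\varepsilon]$. Because both pieces are mean zero, adding a constant to the score does not change this expectation.

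The main obstacle is making the heuristic step rigorous for continuous $X$, where the point masses $\mathds 1\{X=x\}$ have no literal meaning. I would handle this by direct differentiation. First factor the score as $s = s(X) + s(R\mid X) + s(D\mid X,R) + s(Y\mid X,R,D)$. Then show that $\partial_\varepsilon\mu_\varepsilon(x)=\EE[RD(L-\mu)\,s \mid X=x]/\pi(x)$, using that $\EE[L-\mu\mid X,R=1,D=1]=0$ kills every score component except $s(Y\mid\cdot)$. Similarly, $\partial_\varepsilon p_{T,\varepsilon}$ contributes $\EE[(1-R)(\mu-\psi)s]/\rho$, where the $-\psi$ comes from differentiating $\rho_\varepsilon$ in the denominator. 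Bounded likelihood ratio and positivity (\Cref{assumption:bounded_likelihood,assumption:positivity}) justify interchanging differentiation and integration.
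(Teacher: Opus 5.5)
Your proposal is correct and follows essentially the paper's route: the product rule on $\psi=\sum_x \mu(x)p_T(x)$, the building-block influence functions of Lemma~\ref{lemma:building_block_if}, and Bayes' rule $p_T/p=g/\rho$. Your mean-zero check is the same computation the paper does in the proof of Theorem~\ref{thrm:estimator-guarantees}, and your score-factorization sketch is a valid rigorization of the paper's discrete-$\mathcal{X}$ heuristic; one small correction is that a uniform lower bound on $\pi=(1-g)\pi_S$ needs Assumption~\ref{assumption:bounded_likelihood} (which gives $1-g\ge(1-\rho)/(1-\rho+\rho C)$, cf.\ \eqref{eqn:pooled_positivity}) together with positivity, not merely $g<1$.
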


\subsubsection{Proof of Proposition \ref{prop:eif}}\label{sec:proof_prop_eif}
Following the semiparametric calculus of \cite{kennedy2024semiparametric}, we treat $\mathcal{X}$ as a discrete set, apply Gateaux differentiation separately to each of $\mu(\cdot)$ and $p_T(\cdot)$, and invoke the product rule for influence functions:
\begin{equation*}
    \IF{\psi} = \sum_{x \in \mathcal{X}} \IF{\mu(x)} p_T(x) + \sum_{x \in \mathcal{X}} \mu(x) \IF{p_T(x)}.
\end{equation*}

Applying the building block influence functions \eqref{eq:if_mu} and \eqref{eq:if_pt} given in Lemma~\ref{lemma:building_block_if} together with Bayes' Rule, we obtain: 
\begin{equation*}
    \IF{\psi} = \frac{R \cdot D }{\pi(X)} \frac{g(X)}{\rho} \left( L - \mu(X) \right) + \frac{(1 - R)}{\rho} \left(\mu(X) - \psi \right)
\end{equation*}
as claimed. By \Cref{lemma:von_mises} detailed below, we see that the von Mises expansion in \eqref{eqn:von_mises_app} has a linear term $\varphi(\cdot; \PP)$ and remainder $R_2(\PP, \overline{\PP})$ that is the product of nuisance functions. Therefore, by \citet[Lemma 2]{kennedy2023semiparametric}, we conclude that $\varphi$ is the efficient influence function. $\qed$

\subsection{von Mises Expansion}\label{proof:efficiency_theory}
\begin{lemma}[von Mises expansion]\label{lemma:von_mises}
    For any two candidate laws $\PP$ and $\overline{\PP} \in \mathcal{P}$, the mapping $\psi: \mathcal{P} \to \RR$ admits the expansion 
    \begin{equation}\label{eqn:von_mises_app}
        \psi(\overline{\PP}) - \psi(\PP) = \int \varphi(z; \overline{\PP}) \, d( \overline{\PP} - \PP)(z) + R_2(\PP, \overline{\PP})
    \end{equation}
    where $\varphi$ is as defined in \eqref{eqn:eif_proof} and the remainder term $R_2(\PP, \overline{\PP})$ is given by 
    \begin{equation*}
        R_2(\PP, \overline{\PP}) = \left( \frac{\overline{\rho} - \rho}{\overline{\rho}}\right) \left( \psi(\overline{\PP}) - \psi(\PP)\right) + \frac{1}{\overline{\rho}}\int g \left( \frac{\pi}{\overline{\pi}} - 1\right) \left( \mu - \overline{\mu}\right)d\PP + \frac{1}{\overline{\rho}} \int (\overline{g} - g) \frac{\pi}{\overline{\pi}}(\mu - \overline{\mu})d\PP
    \end{equation*}
    where we have suppressed the arguments of functions in each term for brevity.
\end{lemma}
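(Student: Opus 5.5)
The plan is a direct computation. Because $\varphi(\cdot;\overline{\PP})$ is mean zero under $\overline{\PP}$ (the centering property of \Cref{lemma:influence_function}, applied at $\overline{\PP}$), we have $\int \varphi(z;\overline{\PP})\,d(\overline{\PP}-\PP)(z) = -\EE_\PP[\varphi(Z;\overline{\PP})]$. The claimed expansion is therefore equivalent to the identity
\begin{equation*}
R_2(\PP,\overline{\PP}) = \psi(\overline{\PP})-\psi(\PP)+\EE_\PP[\varphi(Z;\overline{\PP})],
\end{equation*}
and I will verify this by evaluating $\EE_\PP[\varphi(Z;\overline{\PP})]$ in closed form. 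Throughout, $\overline{\pi},\overline{g},\overline{\mu},\overline{\rho},\overline{\psi}$ are fixed functions and constants determined by $\overline{\PP}$. Only the expectation is taken under $\PP$.

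First I would condition on $X$ in each term of $\varphi(Z;\overline{\PP})$.

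For the labeled-source term, $\EE_\PP[RD(L-\overline{\mu}(X))\mid X] = \pi(X)\bigl(\mu(X)-\overline{\mu}(X)\bigr)$. This uses only the definitions $\pi(X)=\PP(R=1,D=1\mid X)$ and $\mu(X)=\EE[L\mid X,R=1,D=1]$, because $L$ is observed exactly when $RD=1$. This term therefore contributes $\overline{\rho}^{-1}\int \overline{g}\,(\pi/\overline{\pi})(\mu-\overline{\mu})\,d\PP$.

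For the target term, $\EE_\PP[(1-R)\overline{\mu}(X)] = \int g\,\overline{\mu}\,d\PP$ and $\EE_\PP[1-R]=\rho$. This term therefore contributes $\overline{\rho}^{-1}\bigl(\int g\overline{\mu}\,d\PP - \rho\,\overline{\psi}\bigr)$.

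The key algebraic step is to re-express $\int g\overline{\mu}\,d\PP$ through $\psi(\PP)$. By \Cref{prop:identification_target_risk} and Bayes' rule ($g/\rho = p_T/p$), we have $\psi(\PP)=\EE_T[\mu(X)] = \rho^{-1}\int g\mu\,d\PP$. Hence $\int g\overline{\mu}\,d\PP = \rho\,\psi - \int g(\mu-\overline{\mu})\,d\PP$.

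Substituting these pieces into $\psi(\overline{\PP})-\psi(\PP)+\EE_\PP[\varphi(Z;\overline{\PP})]$ leaves two groups of terms.

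The scalar terms are $\overline{\psi}(1-\rho/\overline{\rho}) - \psi(1-\rho/\overline{\rho})$, which equals $\frac{\overline{\rho}-\rho}{\overline{\rho}}(\overline{\psi}-\psi)$.

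The remaining integrals are $\overline{\rho}^{-1}\int \bigl[\overline{g}\,\pi/\overline{\pi} - g\bigr](\mu-\overline{\mu})\,d\PP$. Splitting the bracket as $(\overline{g}-g)\pi/\overline{\pi} + g(\pi/\overline{\pi}-1)$ reproduces exactly the last two terms of $R_2$.

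The main point needing care is the conditioning step for the labeled-source term, and not losing track of which law each quantity belongs to. Two facts have to hold there. First, the expectation is under $\PP$, so the true $\pi$ and $\mu$ appear while the weights carry bars. Second, $\pi/\overline{\pi}$ must be well defined. Positivity (\Cref{assumption:positivity}) together with $0<\overline{\rho}<1$ in \eqref{eqn:rho_bound} guarantees this; note that $\pi=(1-g)\pi_S$. The bounded likelihood ratio (\Cref{assumption:bounded_likelihood}) ensures all integrals are finite. Everything else is bookkeeping.
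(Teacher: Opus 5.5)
Your proposal is correct and follows the paper's proof essentially step for step. Centering of $\varphi(\cdot;\overline{\PP})$ reduces the claim to evaluating $\psi(\overline{\PP})-\psi(\PP)+\EE_\PP[\varphi(Z;\overline{\PP})]$, and conditioning on $X$ gives the same closed form. Your rewriting of $\int g\,\overline{\mu}\,d\PP$ via $\rho\,\psi(\PP)=\int g\mu\,d\PP$ is equivalent to the paper's splitting of $\psi(\overline{\PP})-\psi(\PP)$ into its $(\overline{\rho}-\rho)/\overline{\rho}$ and $\rho/\overline{\rho}$ parts, and the final add-and-subtract of $g\,\pi/\overline{\pi}$ is the same. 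One small correction to your side remark: $\overline{\pi}>0$ does not follow from positivity and $0<\overline{\rho}<1$ alone, because $1-\overline{g}$ must also be bounded away from zero, and that is supplied by the bounded likelihood ratio (as in \eqref{eqn:pooled_positivity}).
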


\begin{proof}[Proof of Lemma~\ref{lemma:von_mises}]
    For any two candidate laws $\PP$ and $\overline{\PP}$ on $Z = (X, R, RD, RDY)$, the von Mises expansion of the estimand $\psi$ around $\PP$ is given by:
\begin{equation}\label{eqn:von_mises_app2}
    \psi(\overline{\PP}) - \psi(\PP) = \int \varphi(z; \overline{\PP}) \, d( \overline{\PP} - \PP)(z) + R(\PP, \overline{\PP})
\end{equation}
where $\varphi(z; \PP)$ is a candidate influence function of $\psi$ under $\PP$ and $R(\PP, \overline{\PP})$ is the remainder term which we will show is second-order. Since $\varphi(z; \overline{\PP})$ is centered under $\overline{\PP}$, \eqref{eqn:von_mises_app2} can be rearranged to express the remainder term as: 
\begin{equation}\label{eqn:von_mises_remainder_term_1}
    R_2(\PP, \overline{\PP}) = \psi(\overline{\PP}) - \psi(\PP) + \int \varphi(z; \overline{\PP}) \, d\PP(z).
\end{equation}
To evaluate the remainder, we express the influence function in terms of the nuisance terms $\mu(X)$, $\pi(X)$, and $g(X)$ defined with respect to $\PP$ together with their counterparts $\overline{\mu}(X)$, $\overline{\pi}(X)$ and $\overline{g}(X)$ defined with respect to $\overline{\PP}$.

By the definition of $\varphi( \cdot;\overline{\PP} )$, 
\begin{equation*}
    \int \varphi(Z; \overline{\PP}) d\PP = \EE_\PP\left[ \frac{RD}{\overline{\pi}(X)}\frac{\overline{g}(X)}{\overline{\rho}}\left(L-\overline{\mu}(X)\right) \right] + \EE_\PP\left[ \frac{1-R}{\overline{\rho}}\left( \overline{\mu}(X)-\psi(\overline{\PP}) \right) \right].
\end{equation*}
Next, we make use of the following two identities which hold for any measurable $h$:
\begin{equation*}
    \EE_\PP[RD  h(X,Y) ] = \EE_\PP \left[\pi(X) \EE_\PP[ h(X,Y) \mid X, R = 1, D = 1] \right],
\end{equation*}
\begin{equation*}
    \EE_\PP[(1-R)h(X)] = \EE_\PP[g(X) h(X)] = \rho \EE_\PP[h(X) \mid R=0],
\end{equation*}
to obtain
\begin{equation}\label{eq:von_mises_integral_simplified_1}
    \begin{aligned}
        \int \varphi(Z; \overline{\PP}) d\PP
&=
\EE_\PP \left[ \frac{\pi }{\overline{\pi} }\frac{\overline{g} }{\overline{\rho}} \left(\mu-\overline{\mu}\right) \right]
+
\frac{1}{\overline{\rho}}\left(\EE_\PP[ (1-R) \overline{\mu} ]-\rho \psi(\overline{\PP}) \right).
    \end{aligned}
\end{equation}

Next, write
\begin{equation}\label{eq:von_mises_psi_diff_term}
    \psi(\overline{\PP}) - \psi(\PP)= \left( \frac{\overline{\rho} - \rho }{\overline{\rho}} \right) \left( \psi(\overline{\PP}) - \psi(\PP) \right) + \frac{\rho}{\overline{\rho}}\left( \psi(\overline{\PP}) - \psi(\PP)\right).
\end{equation}

Substitute \eqref{eq:von_mises_integral_simplified_1}, \eqref{eq:von_mises_psi_diff_term} into \eqref{eqn:von_mises_remainder_term_1} and apply the definition of $\psi$ to obtain
\begin{equation*}
    R(\PP,\overline{\PP}) = \left( \frac{\overline{\rho}-\rho}{\overline{\rho}} \right) \left( \psi(\overline{\PP})-\psi(\PP) \right) + \frac{1}{\overline{\rho}}\int\left( \frac{\pi}{\overline{\pi}}\overline{g} \left( \mu - \overline{\mu} \right) \right) d \PP + \frac{1}{ \overline{\rho} } \int \left( g \left( \overline{\mu} - \mu \right) \right) d\PP.
\end{equation*}

Add and subtract $\frac{g}{\overline{\rho}}\frac{\pi}{\overline{\pi}}\left(\mu -\overline{\mu} \right)$ inside the expectation: 
\begin{equation*}
    \frac{\pi}{\overline{\pi}}\frac{\overline{g}}{\overline{\rho}}(\mu-\overline{\mu}) + \frac{g}{\overline{\rho}}(\overline{\mu}-\mu) = \frac{g}{\overline{\rho}}\left( \frac{\pi}{\overline{\pi}}-1\right) (\mu-\overline{\mu}) + \frac{\overline{g} -g }{\overline{\rho}} \cdot \frac{\pi}{\overline{\pi}}(\mu-\overline{\mu})
\end{equation*}
and substitute to yield the desired result.
\end{proof}

\subsection{Formal Statement and Proof of Theorem~\ref{thrm:estimator-guarantees_short}}\label{sec:estimator-guarantees}
First, we  provide a formal general statement of ~\Cref{thrm:estimator-guarantees_short}. 

\begin{theorem}[Asymptotic properties of the DML estimator $\widehat{\psi}$]\label{thrm:estimator-guarantees}
    Let $\widehat{\psi}$ be the estimator in \eqref{eqn:dml_estimator} with nuisance estimators $\widehat{\pi}$, $\widehat{g}$, and $\widehat{\mu}$ trained on an auxiliary sample $\mathcal{S}_n'$ independent of $\mathcal{S}_n$. Suppose Assumptions~\ref{assumption:no_unobserved_confounding}-\ref{assumption:bounded_likelihood} hold and that the following regularity conditions are satisfied: 
    \begin{enumerate}
        \item \textbf{Nuisance consistency:}
        \begin{equation}\label{eqn:estimator-guarantees-nuisance-bounds}
            \norm{\widehat{\mu}-\mu}_{L_2(P)} = o_p(1), \quad \norm{\widehat{\pi}-\pi}_{L_2(P)} = o_p(1), \quad \norm{\widehat{g}-g}_{L_2(P)} = o_p(1).
        \end{equation}
        \item \textbf{Overlap of nuisance estimates:} There exists ${\varepsilon}' > 0$ such that 
        \begin{equation}\label{eqn:estimator-guarantees-estimate-overlap}
            \PP\left( \inf_{x \in \mathcal{X}} \widehat{\pi}(x)\geq {\varepsilon}'/2, \quad \widehat{\rho} \geq \rho/2 \right) \to 1, \quad \PP\left( \widehat{g}(x) \in (0, 1) \; \forall x \in \mathcal{X}\right) \to 1. 
        \end{equation}
        \item \textbf{Moment condition: } There exists $M < \infty$ such that 
        \begin{equation}\label{eqn:estimator-guarantees-moment}
            \EE\left[ (L - \mu(X))^2 \mid X, R = 1, D = 1\right] \leq M, \quad \EE\left[ (\mu(X) - \psi(P))^2\right] \leq M.
        \end{equation}
        \item \textbf{Product condition: } 
        \begin{equation}\label{eqn:estimator-guarantees-product}
            \norm{\widehat{\mu}-\mu}_{L_2(P)}\left( \norm{\widehat{\pi}-\pi}_{L_2(P)} + \norm{\widehat{g}-g}_{L_2(P)} \right) = o_p(n^{-1/2}).
        \end{equation} 
        Note that a sufficient condition is that each of $\widehat{\mu}, \widehat{\pi}$, and $\widehat{g}$ converges at a rate $o_p(n^{-1/4})$ in $L_2(P)$.  
        \item \textbf{Finite variance: } 
        \begin{equation}\label{eqn:estimator-guarantees-moment-2}
            \EE\left[ \varphi(Z; P)^2\right]< \infty
        \end{equation}
    \end{enumerate}
    Then, $\widehat{\psi}$ is asymptotically linear with influence function $\varphi(\cdot; P)$: 
    \begin{equation}\label{eqn:estimator_asymptotic_linearity_app}
        \widehat{\psi}-\psi(P) = \frac{1}{n} \sum_{i = 1}^n \varphi(Z_i; P) + o_p(n^{-1/2}),
    \end{equation}
    and 
    \begin{equation}\label{eqn:estimator_asymptotic_normality_app}
        \sqrt{n}\left(\widehat{\psi}-\psi \right) \overset{d}{\longrightarrow} \NN\left( 0, V_{\varphi} \right), \quad V_{\varphi} \coloneq \operatorname{Var}(\varphi(Z; P)). 
    \end{equation}
\end{theorem}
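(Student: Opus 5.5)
We follow the standard sample-splitting argument for one-step estimators. Write $\widehat{\PP}$ for a law whose nuisances are $(\widehat{\pi},\widehat{g},\widehat{\mu})$, and let $\PP_n$ denote the empirical measure on $\mathcal{S}_n$. A first complication is that $\widehat{\rho}$ is estimated on $\mathcal{S}_n$, so \eqref{eqn:dml_estimator} is not exactly $\PP_n$ applied to a fixed function. We therefore first introduce the oracle-$\rho$ version
\begin{equation*}
\tilde{\psi} = \frac{1}{\rho}\,\PP_n\!\left[\frac{RD}{\widehat{\pi}}\widehat{g}(L-\widehat{\mu}) + (1-R)\widehat{\mu}\right].
\end{equation*}
Note that $\widehat{\psi} = (\rho/\widehat{\rho})\,\tilde{\psi}$. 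Write $\rho/\widehat{\rho} = 1 - (\widehat{\rho}-\rho)/\rho + O_p(n^{-1})$, and use $\widehat{\rho}-\rho = \PP_n[(1-R)-\rho]$. Then
\begin{equation*}
\widehat{\psi}-\psi = (\tilde{\psi}-\psi) - \frac{\psi}{\rho}\,\PP_n[(1-R)-\rho] + o_p(n^{-1/2}),
\end{equation*}
provided $\tilde{\psi}-\psi=O_p(n^{-1/2})$, which the next step delivers. The extra term $-\psi(1-R-\rho)/\rho$ is exactly what turns the uncentered target term $(1-R)\mu/\rho$ into the centered term $(1-R)(\mu-\psi)/\rho$ appearing in $\varphi$.

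For $\tilde{\psi}$, define $\widehat{m}(Z) = \rho^{-1}\bigl[RD\widehat{g}(L-\widehat{\mu})/\widehat{\pi} + (1-R)\widehat{\mu}\bigr]$ and its population analogue $m$. Then $\varphi = m - (1-R)\psi/\rho$. We decompose
\begin{equation*}
\tilde{\psi}-\psi = (\PP_n - P)m + (\PP_n-P)(\widehat{m}-m) + \bigl(P\widehat{m} - \psi\bigr).
\end{equation*}

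\textbf{Empirical-process term.} The first term is a centered i.i.d.\ average and is $O_p(n^{-1/2})$ by condition~5. For the second term, condition on $\mathcal{S}_n'$, so that $\widehat{m}$ is fixed. Chebyshev's inequality gives $(\PP_n-P)(\widehat{m}-m) = O_p(n^{-1/2}\norm{\widehat{m}-m}_{L_2(P)})$. We then bound $\norm{\widehat{m}-m}_{L_2(P)}$ by expanding the difference. On the event in condition~2 (which has probability tending to one), $\widehat{\pi}\ge\varepsilon'/2$ and $\widehat{g}\le1$. Using $\pi\ge(1-g)\varepsilon$ and $g/(1-g)\le C\rho/(1-\rho)$ from Assumptions~\ref{assumption:positivity}--\ref{assumption:bounded_likelihood}, together with the conditional variance bound in condition~3, each piece is controlled by a constant times $\norm{\widehat{\mu}-\mu}+\norm{\widehat{\pi}-\pi}+\norm{\widehat{g}-g}$. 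This is $o_p(1)$ by condition~1, so the term is $o_p(n^{-1/2})$.

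\textbf{Bias term.} $P\widehat{m}-\psi$ is computed exactly as in the proof of Lemma~\ref{lemma:von_mises}, with $\overline{\PP}=\widehat{\PP}$ and $\overline{\rho}$ replaced by the true $\rho$, which makes the first remainder term vanish. Iterating expectations gives
\begin{equation*}
P\widehat{m}-\psi = \frac{1}{\rho}\int\Bigl[g\Bigl(\frac{\pi}{\widehat{\pi}}-1\Bigr)(\mu-\widehat{\mu}) + (\widehat{g}-g)\frac{\pi}{\widehat{\pi}}(\mu-\widehat{\mu})\Bigr]dP.
\end{equation*}
By Cauchy--Schwarz with $\widehat{\pi}\ge\varepsilon'/2$, $g\le1$, and $\pi\le1$, this is at most $c\,\norm{\widehat{\mu}-\mu}\bigl(\norm{\widehat{\pi}-\pi}+\norm{\widehat{g}-g}\bigr)$, which is $o_p(n^{-1/2})$ by condition~4.

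\textbf{Conclusion.} Combining the three pieces with the $\widehat{\rho}$ linearization yields \eqref{eqn:estimator_asymptotic_linearity_app}. The central limit theorem, applied with condition~5 and $\EE[\varphi]=0$ (Lemma~\ref{lemma:influence_function}), then gives \eqref{eqn:estimator_asymptotic_normality_app} via Slutsky's lemma.

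\textbf{Main obstacle.} I expect the hardest part to be the $L_2$ bound on $\widehat{m}-m$. It needs uniform control of the weights $g/\pi$, which is where Assumptions~\ref{assumption:positivity} and~\ref{assumption:bounded_likelihood} enter. It also requires care that the relevant events hold with probability tending to one rather than surely; I would handle this by working on those events and noting that their complements do not affect $o_p$ statements.
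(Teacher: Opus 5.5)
Your proof is correct and follows the paper's argument: your $\widehat m - m$ is exactly the paper's $B/\rho$. The conditional-Chebyshev bound on the empirical-process term and the Cauchy--Schwarz product bound on the bias $\rho\,(P\widehat m-\psi)=\EE[(g-\widehat g\,\pi/\widehat\pi)(\widehat\mu-\mu)]$ are the same steps the paper takes. The one difference is how you handle $\widehat\rho$: the paper uses the exact identity $\PP_n(1-R)=\widehat\rho$ to write $\widehat\psi-\psi=(\rho/\widehat\rho)\PP_n\varphi+\widehat\rho^{-1}\PP_n B$ with no Taylor expansion and no preliminary $O_p(n^{-1/2})$ rate for $\tilde\psi$, while your oracle-$\rho$ linearization is slightly longer but shows explicitly that estimating $\rho$ is what produces the centering $-(1-R)\psi/\rho$ in $\varphi$.
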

\subsubsection{Proof of Theorem~\ref{thrm:estimator-guarantees}}\label{proof:estimator_properties}
     Throughout, we suppress the arguments of all nuisance functions for notational convenience. We adopt the notation 
     \begin{equation*}
         \PP_n f = \frac{1}{n}\sum_{i=1}^n f(Z_i), \quad P f = \int f(z) dP(z) = \EE_P[f(Z)]
     \end{equation*}
     where $P$ denotes the pooled law of $Z$. We assume throughout that $\widehat{\pi}, \widehat{g}, \widehat{\mu}$ are estimated on an auxiliary sample $\mathcal{S}_n'=\{Z_i'\}_{i=1}^n$, sampled independently from the evaluation sample $\mathcal{S}_n = \{Z_i\}_{i=1}^n$. Also define
     \begin{equation*}
         \widehat{\rho} = \frac{1}{n} \sum_{i=1}^n \mathds{1}\{R_i=0\}. 
     \end{equation*}
     
     We first relate the source distribution positivity condition \Cref{assumption:positivity} with a positivity condition on the pooled labeled source propensity $\pi(X)$. Since 
     \begin{equation*}
         \pi(X) = \PP(R = 1 | X) \pi_S(X),
     \end{equation*}
     by Bayes' rule and \Cref{assumption:bounded_likelihood}, we have that
     \begin{equation*}
         \PP(R = 1 | X = x) = \frac{(1-\rho)p_S(x)}{(1-\rho)p_S(x) + \rho p_T(x)} \geq \frac{1 - \rho}{1 - \rho + \rho C}.
     \end{equation*}
     Then, by \eqref{eqn:rho_bound} and \Cref{assumption:positivity}, we have 
     \begin{equation}\label{eqn:pooled_positivity}
         \pi(X) > {\varepsilon }\frac{1 - \rho}{1 - \rho + \rho C}\eqqcolon {\varepsilon}'
     \end{equation}
     almost surely.

     Next, we decompose the left-hand side \eqref{eqn:estimator_asymptotic_linearity_app}. Recall the DML estimator \eqref{eqn:dml_estimator}: 
     \begin{equation*}
         \widehat{\psi}=\frac{1}{\widehat{\rho}}\PP_n\left( \frac{RD}{\widehat{\pi}} \widehat{g}(L-\widehat{\mu}) + (1-R) \widehat{\mu} \right). 
     \end{equation*}
     Subtracting $\psi(P)$ from both sides and using $\widehat{\rho}=\PP_n(1-R)$ gives
     \begin{equation*}
         \widehat{\psi}-\psi(P) = \frac{\rho}{\widehat{\rho}} \PP_n \varphi(\cdot; P) + \frac{1}{\widehat{\rho}} \PP_n B. 
     \end{equation*}
     where 
     \begin{equation*}
         B \coloneq RD \left( \frac{\widehat{g}}{\widehat{\pi}} - \frac{g}{\pi} \right) (L-\mu) - RD \frac{\widehat{g}}{\widehat{\pi}} (\widehat{\mu} - \mu) + (1-R)(\widehat{\mu}-\mu).
     \end{equation*}
     Rearranging,
     \begin{equation}\label{eqn:estimator_theoretical_bounds_rearranged}
         \widehat{\psi}-\psi(P) = \PP_n \varphi(\cdot; P) + \left( \frac{\rho}{\widehat{\rho}} - 1 \right) \PP_n \varphi(\cdot; P)  + \frac{1}{\widehat{\rho}} (\PP_n-P) B + \frac{1}{\widehat{\rho}} PB. 
     \end{equation}
     We proceed by controlling each term of \eqref{eqn:estimator_theoretical_bounds_rearranged} separately. 

     \paragraph{Preliminary bounds: } 
     Define the event 
    \begin{equation*}
        \mathcal{E} \coloneq \left\{\inf_{x \in \mathcal{X}} \widehat{\pi}(x) \geq {\varepsilon}'/2, \widehat{\rho} \geq \rho/2, \widehat{g}(x) \in (0, 1) \forall x \in \mathcal{X} \right\}.
    \end{equation*}
    By Assumption~\eqref{eqn:estimator-guarantees-estimate-overlap}, $\PP(\mathcal{E}) \to 1$. On $\mathcal{E}$, we have 
    \begin{equation}\label{eqn:event_vareps_bounds}
        \frac{1}{\widehat{\rho}} \leq \frac{2}{\rho}, \quad \left| \frac{\rho}{\widehat{\rho}} - 1 \right| \leq \frac{2}{\rho} |\widehat{\rho}-\rho|, \quad \norm{ \frac{1}{\widehat{\pi}}}_{\infty} \leq \frac{2}{{\varepsilon}'}, \quad  \norm{ \frac{\widehat{g}}{\widehat{\pi}}}_{\infty} \leq \frac{2}{{\varepsilon}'}. 
    \end{equation}
    Moreover, since $\widehat{\rho} = \frac{1}{n} \sum_{i = 1}^n \mathds{1}\{R_i = 0\}$ is the sample mean of i.i.d. $\text{Bernoulli}(\rho)$ random variables, we have that
     \begin{equation}\label{eqn:estimator-guarantees-rho-bound}
        |\widehat{\rho}- \rho| = O_p(n^{-1/2}).
     \end{equation}

     Also, on $\mathcal{E}$, we have the bound
     \begin{equation}\label{eqn:1_minus_pi_hp}
         \left| 1 - \frac{\pi}{\widehat{\pi}}\right| = \frac{|\pi-\widehat{\pi}|}{\widehat{\pi}} \leq \frac{2}{{\varepsilon}'}|\pi-\widehat{\pi}|. 
     \end{equation} 

     \paragraph{First term: }
     First, we note that we can write 
     \begin{equation*}
         P \varphi = \frac{1}{\rho} \EE\left[ g(X) \EE\left[ \frac{RD}{\pi} (L-\mu) | X\right] \right] + \frac{1}{\rho} \EE\left[(1-R)(\mu - \psi) \right]. 
     \end{equation*}
     For the first term, by Law of Iterated Expectation and the definition of $\mu$, 
     \begin{equation}\label{eqn:estimator_theoretical_bounds_L_mu_mean_zero}
         \EE\left[ \frac{RD}{\pi} (L-\mu) | X\right]  = \EE[L - \mu | X, R = 1, D = 1] = 0.
     \end{equation}
     For the second term, by Law of Iterated Expectation and Law of Total Probability
     \begin{equation*}
         \EE\left[(1-R)(\mu - \psi) \right] = \EE\left[(\mu(X) - \psi)\EE\left[1-R|X \right] \right] = \EE\left[ g(X) (\mu - \psi) \right] = \rho \EE_T\left[ \mu - \psi \right] = 0.
     \end{equation*}
     Hence, $P \varphi = 0$ and it follows that 
     \begin{equation*}
         \PP_n \varphi(\cdot; P) = (\PP_n - P) \varphi(\cdot; P). 
     \end{equation*}
     By \eqref{eqn:estimator-guarantees-moment-2}, $\EE[\varphi(Z; P)^2] < \infty$, so, by the central limit theorem, 
     \begin{equation}\label{eq:estimator_guarantees_first_term_final_bound}
         \PP_n \varphi(\cdot; P) = O_p(n^{-1/2}).
     \end{equation}

     \paragraph{Second term: } By \eqref{eq:estimator_guarantees_first_term_final_bound} and \eqref{eqn:event_vareps_bounds}, we get
     \begin{equation*}
         \left( \frac{\rho}{\widehat{\rho}} - 1 \right) \PP_n \varphi(\cdot; P) = O_p(n^{-1/2}) O_p(n^{-1/2}) = o_p(n^{-1/2}).
     \end{equation*}
     
     \paragraph{Third term: }
     First note that, since the nuisance functions are constructed on the independent sample $\mathcal{S}_n'$, 
     \begin{equation*}
         \EE\left[ (\PP_n-P) B \mid \mathcal{S}_n'\right] = 0.
     \end{equation*}
     Next, on $\mathcal{E}$, by \eqref{eqn:event_vareps_bounds} and triangle inequality, 
     \begin{equation*}
         |B| \leq RD \left| \frac{\widehat{g}}{\widehat{\pi}} - \frac{g}{\pi}\right| |L - \mu| + RD\left| \frac{\widehat{g}}{\widehat{\pi}}\right| |\widehat{\mu}-\mu | + (1-R) |\widehat{\mu}-\mu | \leq RD \left| \frac{\widehat{g}}{\widehat{\pi}} - \frac{g}{\pi}\right| |L - \mu| + \left( 1 + \frac{2}{{\varepsilon}'}\right) |\widehat{\mu}-\mu |. 
     \end{equation*}
     Then, by Cauchy-Schwarz, 
     \begin{equation}\label{eq:estimator_property_variance_term}
         P(B^2) \leq 2 P\left( \left( \frac{\widehat{g}}{\widehat{\pi}} - \frac{g}{\pi}\right)^2 RD (L-\mu)^2 \right) + 2 \left( 1 + \frac{2}{{\varepsilon}'}\right)^2 P\left(  (\widehat{\mu}-\mu )^2 \right).
     \end{equation}
     For the first term of \eqref{eq:estimator_property_variance_term}, by iterated expectation and \eqref{eqn:estimator-guarantees-moment-2}, 
     \begin{equation*}
         \begin{aligned}
             P\left( \left( \frac{\widehat{g}}{\widehat{\pi}} - \frac{g}{\pi}\right)^2 RD (L-\mu)^2 \right) &= \EE\left[ \left( \frac{\widehat{g}}{\widehat{\pi}} - \frac{g}{\pi}\right)^2 \EE\left[RD (L-\mu)^2 | X \right]\right]\\
             &= \EE\left[ \left( \frac{\widehat{g}}{\widehat{\pi}} - \frac{g}{\pi}\right)^2 \pi(X)\EE\left[ (L-\mu)^2 | X, R = 1, D = 1 \right]\right] \\
             &\leq M \norm{\frac{\widehat{g}}{\widehat{\pi}} - \frac{g}{\pi}}_{L_2(P)}^2
         \end{aligned}
     \end{equation*}
     Moreover, by \eqref{eqn:event_vareps_bounds} and \eqref{eqn:pooled_positivity}, on $\mathcal{E}$, 
     \begin{equation*}
         \left| \frac{\widehat{g}}{\widehat{\pi}} - \frac{g}{\pi} \right|\leq \frac{2}{{\varepsilon}'}|\widehat{g}-g| + |g|\left|\frac{1}{\widehat{\pi}} - \frac{1}{\pi}\right|\leq  \frac{2}{{\varepsilon}'}|\widehat{g}-g| + \frac{2}{{{\varepsilon}'}^2}|\widehat{\pi}-\pi|.
     \end{equation*}
     Therefore, 
     \begin{equation*}
         \norm{\frac{\widehat{g}}{\widehat{\pi}} - \frac{g}{\pi}}_{L_2(P)} \leq \frac{2}{{\varepsilon}'} \norm{\widehat{g}-g}_{L_2(P)} + \frac{2}{{\varepsilon}'^2} \norm{\widehat{\pi}-\pi}_{L_2(P)} = o_p(1)
     \end{equation*}
     where the final equality applies \eqref{eqn:estimator-guarantees-nuisance-bounds}. We use \eqref{eqn:estimator-guarantees-nuisance-bounds} once again to evaluate the second term of \eqref{eq:estimator_property_variance_term}: 
     \begin{equation*}
         P[(\widehat{\mu}-\mu)^2] = o_p(1).
     \end{equation*}
     All together, we have 
     \begin{equation*}
         P(B^2) = o_p(1).
     \end{equation*}
     Hence, the conditional variance of the empirical process term on $\mathcal{E}$ is bounded as
     \begin{equation*}
         \operatorname{Var}\left((\PP_n-P) B \mid \mathcal{S}_n' \right) = \frac{1}{n} \operatorname{Var}(B(Z)|\mathcal{S}_n') \leq \frac{1}{n} \EE[B^2|\mathcal{S}_n'] = \frac{1}{n} P[B^2] = o_p(n^{-1}).
     \end{equation*}
     It follows that, by Chebyshev's inequality, 
     \begin{equation*}
         (\PP_n-P) B = o_p(n^{-1/2}).
     \end{equation*}
     By \eqref{eqn:event_vareps_bounds} we conclude 
     \begin{equation*}
         \frac{1}{\widehat{\rho}} (\PP_n-P) B = o_p(n^{-1/2})
     \end{equation*}

     \paragraph{Fourth term: }
     As in \eqref{eqn:estimator_theoretical_bounds_L_mu_mean_zero}, 
     \begin{equation*}
         \EE\left[ \frac{RD}{\pi} (L-\mu) | X\right] = 0.
     \end{equation*}
     Thus, 
     \begin{equation*}
         PB = - \EE \left[ RD \frac{\widehat{g}}{\widehat{\pi}} (\widehat{\mu} - \mu)\right] + \EE \left[(1-R)(\widehat{\mu}-\mu)\right]. 
     \end{equation*}
     Using law of iterated expectation combined with $\EE[RD|X] = \pi(X)$ and $\EE[1-R|X]=g(X)$, we obtain 
     \begin{equation*}
         \begin{aligned}
             PB &= - \EE \left[ \frac{\widehat{g}}{\widehat{\pi}} \pi (\widehat{\mu} - \mu)\right] + \EE \left[g (\widehat{\mu}-\mu)\right] = \EE \left[ \left( g - \widehat{g} \frac{\pi}{\widehat{\pi}}\right)  (\widehat{\mu} - \mu)\right].
         \end{aligned}
     \end{equation*}

     Therefore, on the event $\mathcal{E}$, 
     \begin{equation*}
         \begin{aligned}
             |PB| &\leq \norm{\widehat{\mu}-\mu}_{L_2(P)}\norm{g - \widehat{g} \frac{\pi}{\widehat{\pi}}}_{L_2(P)}\\
             &\leq \norm{\widehat{\mu}-\mu}_{L_2(P)}\left( \norm{g - \widehat{g}}_{L_2(P)} +  \norm{\widehat{g}\left(1 - \frac{\pi}{\widehat{\pi}}\right)}_{L_2(P)} \right)\\ 
             &\leq \norm{\widehat{\mu}-\mu}_{L_2(P)}\left( \norm{g - \widehat{g}}_{L_2(P)} +  \frac{2}{\varepsilon'} \norm{\widehat{\pi} - \pi }_{L_2(P)} \right)
         \end{aligned}
     \end{equation*}
     where the first inequality uses Cauchy-Schwarz, the second line follows by triangle inequality, and the last line uses \eqref{eqn:estimator-guarantees-estimate-overlap}, \eqref{eqn:event_vareps_bounds}, and \eqref{eqn:1_minus_pi_hp}. By \eqref{eqn:estimator-guarantees-product} and \eqref{eqn:event_vareps_bounds}, we reach 
     \begin{equation*}
         \frac{1}{\widehat{\rho}} |PB| = o_p(n^{-1/2}). 
     \end{equation*}

     \paragraph{Collecting bounds:} Returning to \eqref{eqn:estimator_theoretical_bounds_rearranged}, we have established so far that
     \begin{equation*}
          \left( \frac{\rho}{\widehat{\rho}} - 1 \right) \PP_n \varphi(\cdot; P) = o_p(n^{-1/2}), \quad \frac{1}{\widehat{\rho}} (\PP_n-P) B = o_p(n^{-1/2}), \quad \frac{1}{\widehat{\rho}} |PB| = o_p(n^{-1/2}).
     \end{equation*}
     Hence, 
     \begin{equation*}
         \widehat{\psi}-\psi(P) = \frac{1}{n} \sum_{i = 1}^n \varphi(Z_i; P) + o_p(n^{-1/2})
     \end{equation*}
     which establishes asymptotic linearity \eqref{eqn:estimator_asymptotic_linearity_app}. 
     Finally, since $\EE[\varphi(Z; P)^2] < \infty$, applying the central limit theorem with Slutsky's theorem gives \eqref{eqn:estimator_asymptotic_normality_app} as claimed. $\qed$

\section{Synthetic Experiments}\label{sec:synthetic_experiments}
We demonstrate the performance of our proposed DML estimator and compare it to the existing doubly robust pre-deployment evaluation methods designed to evaluate model performance under \emph{either} covariate shift or selective labels.

\paragraph{Synthetic Data Generation: }
In each Monte Carlo iteration, we draw $n_S$ source covariates and $n_T$ target Gaussian covariates, each $m$-dimensional: $X_i^{(S)} \sim \NN(\mu_S, \Sigma_S), \quad i \in \{1, \hdots n_S\}$,  $X_j^{(T)} \sim \NN(\mu_T, \Sigma_T), \quad j \in \{1, \hdots n_T\}$. 
We simulate covariate shift by varying $\mu_S \neq \mu_T$ and/or $\Sigma_S \neq \Sigma_T$. We pool the covariates into a single dataset $\{(X_k, R_k)\}_{k \in [n]}$ where $n \coloneq n_S + n_T$, $R_k = 1$ denotes a source unit, and $R_k = 0$ denotes a target unit.
\paragraph{Labeling and Outcome Models: }
For source units $(R = 1)$, label observability follows a covariate-dependent Bernoulli mechanism with an overlap floor:
\begin{equation}\label{eqn:labeling_sythetic}
    D | (X, R = 1) \sim \operatorname{Bern}(\pi_S(X)), \quad \pi_S(X) = \pi_{\min} + (1-2\pi_{\min}) \sigma(-\alpha^\top X).
\end{equation}
The vector $\alpha$ controls the direction and strength of the selective labels mechanism, where the sign convention in \eqref{eqn:labeling_sythetic} makes labels less likely to be observed in the direction $\alpha$. In the fixed-selection experiments, we set $\alpha = c \cdot \mathds{1}_m$ for a fixed, experiment-dependent scalar $c$. In experiments that vary selection strength, we sweep over values of $c$, with larger values corresponding to stronger dependence of label observability on the covariates. The constant $\pi_{\min} \in (0, 1/2)$ ensures that Assumption~\ref{assumption:positivity} is satisfied with $\varepsilon = \pi_{\min}$. 

We generate continuous latent outcomes according to 
\begin{equation}\label{eqn:outcome_sythetic}
    Y = \xi(X) + \varepsilon, \quad \varepsilon \sim \NN(0, \sigma_\varepsilon^2), 
\end{equation}
where the conditional mean is defined relative to the fixed prediction model $f$. In particular, we construct the conditional mean as $\xi(x) = f(x) + \nu(x),$
such that, under squared loss, $L = (f(X)-Y)^2$, the conditional expected loss satisfies $\mu(x) = \EE[L|X = x] = \nu(x)^2 + \sigma_\varepsilon^2$.
In other words, this construction allows us to specify where the prediction model $f$ performs poorly through $\nu(x)$. To understand the robustness of our framework to misspecification, we take $\nu(x)$ to be nonlinear throughout our experiments and refer the reader to Appendix \ref{sec:synthetic_experiment_details} for details. 
\subsection{Synthetic Experiment Results}
The bias and Root Mean Squared Error (RMSE) of each estimator across four experiments, each varying either the magnitude of covariate shift, selection strength, or the ratio of samples from the source and target distributions are reported in Figure~\ref{fig:synthetic}. 

In the mean-shift experiment (Figure~\ref{fig:synthetic-mean-bias}, Figure~\ref{fig:synthetic-mean-rmse}), the target distribution mean is shifted increasingly from the source mean. We construct this mean shift so that it is in the direction of increasing prediction error, i.e., in the direction of $\nu(x)$. This means that, as the mean shift magnitude grows, the target risk increases. We observe that SL-only and CS-only estimators grow increasingly biased for the target risk. The plug-in and DML estimators remain approximately unbiased. However, the DML estimator has lower RMSE than the other estimators, especially for larger shifts. 

The covariance-shift experiments yield the same qualitative conclusion, except that the SL-only estimator decreases in bias as the shift increases. However, we note that, as reported in Figure~\ref{fig:synthetic_risk_additional}, this is merely a product of the fact that the SL-only estimate of risk is a constant, initially upward biased estimate of the increasing true target risk. 

As selection increasingly depends on covariates, the CS-only estimator deteriorates because it treats the selectively labeled source sample as representative after covariate transport. The SL-only estimator remains biased because it corrects selective labels only in the  source distribution. Both DML and the plug-in remain approximately unbiased, but DML once again achieves uniformly smaller RMSE. 

Finally, the target/source ratio experiment varies the amount of unlabeled target data while holding both covariate shift and selective labeling fixed. The CS-only and SL-only estimators exhibit non-vanishing bias as the sample size increases. The DML and plug-in estimators remain approximately unbiased over the interval, while the DML maintains lower RMSE than the plug-in. 
\begin{figure}[H]
    \centering

    \setlength{\fboxsep}{0.15em}
    \setlength{\fboxrule}{0.2pt}

    \newlength{\syntheticpanelheight}
    \setlength{\syntheticpanelheight}{0.29\textheight}

    \renewcommand{\panelplot}[1]{%
        \makebox[\linewidth][c]{%
            \resizebox{!}{0.72\linewidth}{\input{#1}}%
        }%
    }

    \fcolorbox{darkgray}{white}{%
    \begin{minipage}[t][\syntheticpanelheight][t]{0.225\textwidth}
        \centering
        \vspace{0pt}
        \small{Mean shift}

        \vspace{0.4em}

        \refstepcounter{panelgroup}
        \setcounter{subfigure}{0}

        \subcaptionbox{Bias\label{fig:synthetic-mean-bias}}[\linewidth]{%
        \makebox[\linewidth][c]{%
            \resizebox{!}{0.72\linewidth}{\input{figs/synthetic/mean_shift_bias}}%
        }%
        }

        \vspace{0.75em}

        \subcaptionbox{RMSE\label{fig:synthetic-mean-rmse}}[\linewidth]{%
        \makebox[\linewidth][c]{%
            \resizebox{!}{0.72\linewidth}{\input{figs/synthetic/mean_shift_rmse}}%
        }%
        }
    \end{minipage}%
    }
    \hfill
    \fcolorbox{darkgray}{white}{%
    \begin{minipage}[t][\syntheticpanelheight][t]{0.225\textwidth}
        \centering
        \vspace{0pt}
        \small{Covariance shift}

        \vspace{0.4em}

        \refstepcounter{panelgroup}
        \setcounter{subfigure}{0}

        \subcaptionbox{Bias\label{fig:synthetic-cov-bias}}[\linewidth]{%
        \makebox[\linewidth][c]{%
            \resizebox{!}{0.72\linewidth}{\input{figs/synthetic/cov_shift_covariance_bias}}%
        }%
        }

        \vspace{0.75em}

        \subcaptionbox{RMSE\label{fig:synthetic-cov-rmse}}[\linewidth]{%
        \makebox[\linewidth][c]{%
            \resizebox{!}{0.72\linewidth}{\input{figs/synthetic/cov_shift_covariance_rmse}}%
        }%
        }
    \end{minipage}%
    }
    \hfill
    \fcolorbox{darkgray}{white}{%
    \begin{minipage}[t][\syntheticpanelheight][t]{0.225\textwidth}
        \centering
        \vspace{0pt}
        \small{Selection strength}

        \vspace{0.4em}

        \refstepcounter{panelgroup}
        \setcounter{subfigure}{0}

        \subcaptionbox{Bias\label{fig:synthetic-selection-bias}}[\linewidth]{%
        \makebox[\linewidth][c]{%
            \resizebox{!}{0.72\linewidth}{\input{figs/synthetic/selection_strength_bias}}%
        }%
        }

        \vspace{0.75em}

        \subcaptionbox{RMSE\label{fig:synthetic-selection-rmse}}[\linewidth]{%
        \makebox[\linewidth][c]{%
            \resizebox{!}{0.72\linewidth}{\input{figs/synthetic/selection_strength_rmse}}%
        }%
        }
    \end{minipage}%
    }
    \hfill
    \fcolorbox{darkgray}{white}{%
    \begin{minipage}[t][\syntheticpanelheight][t]{0.225\textwidth}
        \centering
        \vspace{0pt}
        \small{Sample ratio}

        \vspace{0.4em}

        \refstepcounter{panelgroup}
        \setcounter{subfigure}{0}

        \subcaptionbox{Bias\label{fig:synthetic-ratio-bias}}[\linewidth]{%
        \makebox[\linewidth][c]{%
            \resizebox{!}{0.72\linewidth}{\input{figs/synthetic/target_source_ratio_bias}}%
        }%
        }

        \vspace{0.75em}

        \subcaptionbox{RMSE\label{fig:synthetic-ratio-rmse}}[\linewidth]{%
        \makebox[\linewidth][c]{%
            \resizebox{!}{0.72\linewidth}{\input{figs/synthetic/target_source_ratio_rmse}}%
        }%
        }
    \end{minipage}%
    }

    \vspace{0.4em}

    \begin{minipage}{\textwidth}
        \centering
        \scalebox{0.8}{\input{figs/synthetic/synthetic_legend}}
    \end{minipage}

    \caption{\textbf{Synthetic experiments under covariate shift and selective labels.}
    Each column summarizes one experiment and reports estimator bias in the top row and Root Mean Squared Error (RMSE) in the bottom row. The four panels sweep on the x-axis, respectively: (1.1) the magnitude of the target mean shift, (1.2) the magnitude of the target covariance shift, (1.3) the strength of the covariate-dependent labeling mechanism, and (1.4) the target/source sample size ratio under fixed covariate shift and selection bias. Note that the x-axes in (1.1)--(1.3) are linear and (1.4) is on a log scale. See Appendix \ref{sec:synthetic_experiment_details} for further details on the experimentation setup.}
    \label{fig:synthetic}
\end{figure}

\section{Additional Figures}\label{sec:additional_figures}
\begin{figure}[H]
    \centering

    \begin{subfigure}[t]{0.24\textwidth}
        \centering
        \resizebox{\linewidth}{!}{\input{figs/eicu_hist/hist/hist_eicu_hospital_443_age}}
        \caption{Hospital 443 }
        \label{fig:eicu_443_age}
    \end{subfigure}\hfill
    \begin{subfigure}[t]{0.24\textwidth}
        \centering
        \resizebox{\linewidth}{!}{\input{figs/eicu_hist/hist/hist_eicu_hospital_199_age}}
        \caption{Hospital 199}
        \label{fig:eicu_199_age}
    \end{subfigure}\hfill
    \begin{subfigure}[t]{0.24\textwidth}
        \centering
        \resizebox{\linewidth}{!}{\input{figs/eicu_hist/hist/hist_eicu_hospital_283_age}}
        \caption{Hospital 283 }
        \label{fig:eicu_283_age}
    \end{subfigure}\hfill
    \begin{subfigure}[t]{0.24\textwidth}
        \centering
        \resizebox{\linewidth}{!}{\input{figs/eicu_hist/hist/hist_eicu_hospital_208_age}}
        \caption{Hospital 208}
        \label{fig:eicu_208_age}
    \end{subfigure}

    \par\medskip 

    \begin{subfigure}[t]{0.24\textwidth}
        \centering
        \resizebox{\linewidth}{!}{\input{figs/eicu_hist/hist/hist_eicu_hospital_443_ethnicity}}
        \caption{Hospital 443}
        \label{fig:eicu_443_eth}
    \end{subfigure}\hfill
    \begin{subfigure}[t]{0.24\textwidth}
        \centering
        \resizebox{\linewidth}{!}{\input{figs/eicu_hist/hist/hist_eicu_hospital_199_ethnicity}}
        \caption{\footnotesize{Hospital 199}}
        \label{fig:eicu_199_eth}
    \end{subfigure}\hfill
    \begin{subfigure}[t]{0.24\textwidth}
        \centering
        \resizebox{\linewidth}{!}{\input{figs/eicu_hist/hist/hist_eicu_hospital_283_ethnicity}}
        \caption{{Hospital 283}}
        \label{fig:eicu_283_eth}
    \end{subfigure}\hfill
    \begin{subfigure}[t]{0.24\textwidth}
        \centering
        \resizebox{\linewidth}{!}{\input{figs/eicu_hist/hist/hist_eicu_hospital_208_ethnicity}}
        \caption{Hospital 208}
        \label{fig:eicu_208_eth}
    \end{subfigure}

    \caption{\textbf{eICU inter-hospital covariate shifts. }Age (top row) and ethnicity (bottom row) distributions across hospitals in the eICU data. Hospital 443 tends to have younger patients and more African American patients; Hospital 199 exhibits a typical age profile and more Caucasian patients; Hospital 283 skews older with a larger share of patients labeled as unknown or ``other'' ethnicity; Hospital 208 is approximately average on both.}
    \label{fig:eicu_age_eth_by_hospital}
\end{figure}
\begin{figure}[H]
    \centering

    \begin{minipage}[t]{0.245\textwidth}
        \centering
        \refstepcounter{panelgroup}
        \setcounter{subfigure}{0}

        \subcaptionbox{Mean-shift risk\label{fig:synthetic-mean-risk}}{%
            \resizebox{\textwidth}{!}{\input{figs/synthetic/mean_shift_risk}}%
        }
    \end{minipage}
    \hfill
    \begin{minipage}[t]{0.245\textwidth}
        \centering
        \refstepcounter{panelgroup}
        \setcounter{subfigure}{0}

        \subcaptionbox{Cov-shift risk\label{fig:synthetic-cov-risk}}{%
            \resizebox{\textwidth}{!}{\input{figs/synthetic/cov_shift_covariance_risk}}%
        }
    \end{minipage}
    \hfill
    \begin{minipage}[t]{0.245\textwidth}
        \centering
        \refstepcounter{panelgroup}
        \setcounter{subfigure}{0}

        \subcaptionbox{Selection-strength risk\label{fig:synthetic-selection-risk}}{%
            \resizebox{\textwidth}{!}{\input{figs/synthetic/selection_strength_risk}}%
        }
    \end{minipage}
    \hfill
    \begin{minipage}[t]{0.245\textwidth}
        \centering
        \refstepcounter{panelgroup}
        \setcounter{subfigure}{0}

        \subcaptionbox{Target/source risk\label{fig:synthetic-ratio-risk}}{%
            \resizebox{\textwidth}{!}{\input{figs/synthetic/target_source_ratio_risk}}%
        }
    \end{minipage}

    \vspace{0.5em}

    \begin{minipage}{\textwidth}
        \centering
        \scalebox{0.7}{\input{figs/synthetic/synthetic_legend_risk}}
    \end{minipage}

    \caption{\textbf{Synthetic experiments under covariate shift and selective labels risk values.}
    Each column summarizes one experiment and reports estimated risk. The four sweeps vary (i) the magnitude of the target mean shift, (ii) the magnitude of the target covariance shift, (iii) the strength of the covariate-dependent labeling mechanism, and (iv) the target/source sample size ratio under fixed covariate shift and selection bias. Note that the x-axes in (i)--(iii) are linear, while (iv) is on a log scale. See Appendix \ref{sec:synthetic_experiment_details} for further details on the experimental setup.}
    \label{fig:synthetic_risk_additional}
\end{figure}
\begin{figure}[H]
    \centering

    \begin{minipage}[t]{0.245\textwidth}
        \centering
        \refstepcounter{panelgroup}
        \setcounter{subfigure}{0}

        \subcaptionbox{Hospital 443\label{fig:semisynthetic_increasing_selection_strength_with_cov_shift_target443_risk}}{%
            \resizebox{\textwidth}{!}{\input{figs/semisynthetic/eicu_semi/semisynthetic_increasing_selection_strength_with_cov_shift_target443_risk.tikz}}%
        }
    \end{minipage}
    \hfill
    \begin{minipage}[t]{0.245\textwidth}
        \centering
        \refstepcounter{panelgroup}
        \setcounter{subfigure}{0}

        \subcaptionbox{Hospital 199\label{fig:semisynthetic_increasing_selection_strength_with_cov_shift_target199_risk}}{%
            \resizebox{\textwidth}{!}{\input{figs/semisynthetic/eicu_semi/semisynthetic_increasing_selection_strength_with_cov_shift_target199_risk.tikz}}%
        }
    \end{minipage}
    \hfill
    \begin{minipage}[t]{0.245\textwidth}
        \centering
        \refstepcounter{panelgroup}
        \setcounter{subfigure}{0}

        \subcaptionbox{Hospital 283\label{fig:semisynthetic_increasing_selection_strength_with_cov_shift_target283_risk}}{%
            \resizebox{\textwidth}{!}{\input{figs/semisynthetic/eicu_semi/semisynthetic_increasing_selection_strength_with_cov_shift_target283_risk.tikz}}%
        }
    \end{minipage}
    \hfill
    \begin{minipage}[t]{0.245\textwidth}
        \centering
        \refstepcounter{panelgroup}
        \setcounter{subfigure}{0}

        \subcaptionbox{Hospital 208\label{fig:semisynthetic_increasing_selection_strength_no_cov_shift_risk}}{%
            \resizebox{\textwidth}{!}{\input{figs/semisynthetic/eicu_semi/semisynthetic_increasing_selection_strength_no_cov_shift_risk.tikz}}%
        }
    \end{minipage}

    \vspace{0.5em}

    \begin{minipage}{\textwidth}
        \centering
        \scalebox{0.7}{\input{figs/semisynthetic/semisynthetic_legend_risk}}
    \end{minipage}

    \caption{\textbf{Semi-synthetic eICU experiments under increasing selection strength risk values.}
    Each column summarizes one experiment and reports estimated risk. The horizontal axis corresponds to increasing feature dependence in the selection mechanism. The experiment settings correspond to those of \Cref{fig:semisynthetic}. }
    \label{fig:semisynthetic_risk_additional}
\end{figure}
\begin{figure}[H]
    \centering

    \begin{minipage}[t]{0.245\textwidth}
        \centering
        \refstepcounter{panelgroup}
        \setcounter{subfigure}{0}

        \subcaptionbox{Selection on Observables\label{fig:robustness_risk_selection}}{%
            \resizebox{\textwidth}{!}{\input{figs/robustness/selection_on_observables_violation_risk.tikz}}%
        }
    \end{minipage}
    \hfill
    \begin{minipage}[t]{0.245\textwidth}
        \centering
        \refstepcounter{panelgroup}
        \setcounter{subfigure}{0}

        \subcaptionbox{Covariate Shift\label{fig:robustness_risk_covshift}}{%
            \resizebox{\textwidth}{!}{\input{figs/robustness/covariate_shift_violation_risk.tikz}}%
        }
    \end{minipage}
    \hfill
    \begin{minipage}[t]{0.245\textwidth}
        \centering
        \refstepcounter{panelgroup}
        \setcounter{subfigure}{0}

        \subcaptionbox{Positivity\label{fig:robustness_risk_positivity}}{%
            \resizebox{\textwidth}{!}{\input{figs/robustness/positivity_violation_risk.tikz}}%
        }
    \end{minipage}
    \hfill
    \begin{minipage}[t]{0.245\textwidth}
        \centering
        \refstepcounter{panelgroup}
        \setcounter{subfigure}{0}

        \subcaptionbox{Bounded Likelihood Ratio\label{fig:robustness_risk_boundedLR}}{%
            \resizebox{\textwidth}{!}{\input{figs/robustness/bounded_likelihood_ratio_violation_risk.tikz}}%
        }
    \end{minipage}

    \vspace{0.5em}

    \begin{minipage}{\textwidth}
        \centering
        \scalebox{0.7}{\input{figs/synthetic/synthetic_legend_risk}}
    \end{minipage}

    \caption{\textbf{Risk estimates under violations of identifying assumptions.} Each panel shows the estimated target risk as the corresponding assumption violation becomes more severe. The red curve denotes the oracle target risk computed from the known data-generating process.
    }
    \label{fig:robustness_risk}
\end{figure}

\section{Experimentation Details}\label{sec:experiment_details}
\subsection{Estimator Details}\label{sec:estimator_details}
Here we provide precise formulae for the estimators with which we compare the DML estimator. 

\paragraph{DML Estimator: }
We also compute the DML estimator for the target risk by implementing \eqref{eqn:dml_estimator}.
\begin{equation*}
    \widehat{\psi}_{\rm{DML}} = \frac{1}{n} \sum_{k = 1}^n \left\{ \frac{R_k \cdot D_k }{\widehat{\pi}(X_k)}  \cdot \frac{\widehat{g}(X_k)}{\widehat{\rho}} \cdot (L_k - \widehat{\mu}(X_k) ) 
+ \frac{1 - R_k}{\widehat{\rho}} \cdot \widehat{\mu}(X_k) \right\}.
\end{equation*} 
\paragraph{Na\"{\i}ve (Plug-in) Estimator: }
A natural benchmark is the plug-in estimator that directly combines observed labeled source losses. The observed losses are reweighted by the estimated density ratio $\widehat{w}(x)$ and the inverse propensity weights $1/\widehat{\pi}(x)$ to account for both the covariate shift and selective labels. We compute: 
\begin{equation*}
    \widehat{\psi}_{\text{plug-in}} = \frac{1}{n} \sum_{k = 1}^{n} \frac{R_k \cdot D_k}{\widehat{\pi}(X_k)} \cdot \frac{\widehat{g}(X_k)}{\widehat{\rho}} \cdot L_k.
\end{equation*}
\paragraph{Covariate Shift Benchmark: }
To isolate the importance of correcting for selective labels, we implement the doubly robust estimator for evaluation under covariate shift from \cite{morrison2024robust}. Their estimator addresses the setting where the source and target covariate distributions differ according to Assumption~\ref{assumption:cov_shift}; however, it assumes that outcomes are observed for all source units. We adapt their estimator to our setting where labels are only observed for labeled source units (i.e., where $R_i = 1, D_i = 1$): 
\begin{equation*}
    \widehat{\psi}_{\rm{CS}} = \frac{1}{n_T} \sum_{i = 1}^{n} \left\{ \frac{\widehat{g}(X_i)}{1 - \widehat{g}(X_i)} \cdot R_i D_i \cdot (L_i - \widehat{\mu}(X_i))  + (1 - R_i) \widehat{\mu}(X_i) \right\}.
\end{equation*}
Their estimator corrects for covariate shift via the density ratio $\tfrac{\widehat{g}(X_i)}{1 - \widehat{g}(X_i)}$, but it does not account for selective labels. 
\paragraph{Selective Labels Benchmark: }
To isolate the importance of correcting for covariate shift, we implement the doubly robust counterfactual risk estimator from \cite{coston2020counterfactual}. Their estimator addresses the setting where outcome labels are only observed for labeled units and uses inverse propensity weighting to account for the selective labeling mechanism. However, their setting assumes that the training and deployment populations share the same covariate distribution. We use their estimator to evaluate the risk on the source distribution using selectively labeled source samples:
\begin{equation*}
    \widehat{\psi}_{\rm{SL}} = \frac{1}{n_S}\sum_{i = 1}^{n_S} \left\{ \frac{D_i}{\widehat{\pi}_S(X_i)} (L_i - \widehat{\mu}(X_i)) + \widehat{\mu}(X_i)  \right\}.
\end{equation*}
\subsection{Synthetic Experiments}\label{sec:synthetic_experiment_details}
This appendix provides the full specification of the synthetic experiments. In all synthetic experiments, we fix the dimension as $m = 5$. In the mean-shift, covariance shift, and selection-strength shift experiments reported in Figures~\ref{fig:synthetic-mean-bias}, \ref{fig:synthetic-mean-rmse}, \ref{fig:synthetic-cov-bias}, \ref{fig:synthetic-cov-rmse}, \ref{fig:synthetic-selection-bias}, \ref{fig:synthetic-selection-rmse}, we fix $n_S = 1000, n_T = 4000$. We use $n_{\rm{oracle}} = 50,000$ Monte Carlo iterations to estimate the target risk and implement $N_{\rm{it}} = 200$ iterations for each setting in the grid.
Across all experiments, we evaluate the fixed, non-linear prediction model $f(x) = 0.30 x^\top u  + 0.20 x_1$
where $x_1$ denotes the first coordinate of $x$ and $u \coloneq \left(\frac{\mathds{1}_m}{\norm{\mathds{1}_m}_2}\right)$. First, we take $\nu(x) = 1.80 \sigma(1.50 x^\top u) + 0.50 \sin(x_1) - 0.60$. We generate outcomes according to $Y = \xi(x) + \varepsilon$ where $\varepsilon \sim \NN(0, \sigma_\varepsilon^2)$ and $\xi(x)=f(x) + \nu(x)$. Thus, the gap between the prediction rule and the conditional mean outcome given $X  = x$ is specified by the nonlinear choice $\nu(x)$. Also note that we take $\sigma_\varepsilon = 0.15$ and $\pi_{\min} = 0.005$ across all synthetic experiments. We use $3$-fold cross-fitting to fit nuisance parameters across all synthetic experiments.
\paragraph{Increasing mean-shift: } Fix $\mu_S = (0, \hdots, 0) \in \RR^m$, $\Sigma_T = \Sigma_S = I_m$, $\alpha = 1 \cdot u$ (i.e., $c = 1$), $\beta = u$. 
For $s \in [0.0, 0.222, 0.444, 0.667, 0.889, 1.111, 1.333, 1.556, 1.778, 2.0]$, we take $\mu_T = \mu_S + s \cdot u$. 
\paragraph{Increasing covariance-shift: } Fix $\mu_S = \mu_T = (0, \hdots, 0) \in \RR^m$, $\Sigma_S = 3I_m$, $\alpha = 1 \cdot u$. For $c \in [0.5, 0.667, 0.833, 1.0, 1.167, 1.333, 1.5, 1.667, 1.833, 2.0]$, we take $\Sigma_T(x) = I_m + (c-1)u u^\top$. 
\paragraph{Increasing selection strength: } Fix $\mu_S=(0,\hdots,0)\in\RR^m$, $\Sigma_S=I_m$, $\Sigma_T=I_m$, and $\mu_T = \mu_S + 0.5u$. For $a \in [0.0, 0.222, 0.444, 0.667, 0.889, 1.111, 1.333, 1.556, 1.778, 2.0]$, take $\alpha = a\cdot u$. 
\paragraph{Increasing Target/source sample size ratio: } Fix $n_S=1000$ and vary $$\frac{n_T}{n_S}
\in
[0.05, 0.1, 0.25, 0.5, 1, 2, 5, 10, 20, 50, 100].$$ For each ratio $r$, we set $n_T=\mathrm{round}(1000r)$ and fix $\mu_S=(0,\hdots,0)\in\RR^m$, $\mu_T=\mu_S+0.75u$, $\Sigma_S=I_m$,  $\Sigma_T=I_m+(1.25-1)uu^\top$, and $\alpha = a\cdot u$. 
\subsection{Semi-synthetic Experiments}\label{sec:semi-synthetic_experiment_details}

\paragraph{Data: } We construct a covariate vector with $m = 26$ patient-level demographics and admission variables. Specifically, we include indicators for gender, ethnicity, age bin, and ICU unit type, together with admission height, weight, and body mass index (BMI). Age is binned into categories $0$--$19$, $20$--$39$, $40$--$59$, $60$--$79$, $80$--$89$, and $90+$. The continuous variables are median imputed to remove non-finite values and then standardized. 

The fixed prediction rule $f$ is trained once on a training split from hospital 208 using 1095 observations and evaluated on 730 test observations from a test split. The oracle risk is computed over a held-out target evaluation sample of sizes 1829, 2120, 1049, 1301 for hospitals 443, 199, 283, and 208, respectively.

\paragraph{Target and source hospitals: } Though eICU hospitals are de-identified, the data report hospital-level metadata which we give here for additional context on the real-world shifts of our experiments. All hospitals are in the United States. Hospital 208 is a large non-teaching hospital in the South, hospital 443 is a large teaching hospital in the South, hospital 199 is a large teaching hospital in the Northeast, and hospital 283 is a mid-sized non-teaching hospital in the Midwest.

\paragraph{Setup: } Across semi-synthetic experiments, we generate binary outcomes according to the logistic model $Y | X = x \sim \operatorname{Bernoulli}(\sigma(x^\top \beta))$. The coefficient vector $\beta \in \RR^{26}$ is sparse. In particular, for the coordinates corresponding to the covariates age $60$--$79$, age $80$--$89$, age $90+$, BMI, and weight, we set the coordinates of $\beta$ to be $2.5, 4.0, 5.0, 0.5, 0.3$, respectively, with all other entries $0$. This makes it so that older patients are higher risk with some additional variation by weight and BMI. 

To train a fixed $f$, we draw binary labels on the training split held out for the prediction model and fit a random forest classifier with maximum depth $6$ and minimum leaf size $10$. This classifier is held fixed across all Monte Carlo iterations and all values of the selection strength sweep. In the shifted hospital experiments, the fixed model is the one trained on Hospital 208, which we take to be the source hospital. 

For the labels, we construct $\alpha \in \RR^{26}$ as follows. First, set the coordinates corresponding to age $80$--$90$ and $90+$ of $\alpha'$ to be $-2, -2.5$, respectively, with all other entries 0, then set $\alpha$ to be the unit-norm vector in the same direction as $\alpha'$. Then, set $s(x) = x^\top \alpha + (x^\top \alpha)^2 - \frac{1}{n_S} \sum_{i: R_i = 1} (X_i^\top \alpha)^2$. For selection strength $a$, the labeling probability for $X = x$ is $e_a(x) = \pi_{\min} + (1-2 \pi_{\min}) \sigma(C_a + a s(x))$ where $\pi_{\min}=0.05$ and $C_a$ is an intercept calibrated for each $a$ such that the labeling probability is approximately held constant at $0.3$. Then, we draw $D_i |(X_i, R_i = 1) \sim \operatorname{Bernoulli}(e_a(x))$. 

We sweep $a \in \left[ 0, 0.214,\,0.429,\,0.643,\,0.857, 1.071, 1.286, 1.500 \right]$. We run $20$ Monte Carlo iterations per selection strength value. The target estimand is mean absolute prediction error. Since the outcome model is known in the semi-synthetic setting, we evaluate the oracle target risk via $\lambda_{\textrm{true}} = \frac{1}{N_\textrm{oracle}} \sum_{i \in \mathcal{S}_\textrm{oracle}} \left(\sigma(x^\top \beta)|f(X_i) - 1|  + (1-\sigma(x^\top \beta))|f(X_i)|\right)$ where $\mathcal{S}_\textrm{oracle}$ denotes the oracle hold-out set. All nuisance functions are fit with 5-fold cross-fitting. The domain and source classifiers are trained via logistic regression. 

For the loss regression, we estimate the conditional outcome probability $\widehat{p}(x)$ via a random forest regression with 300 trees, max depth 4, minimum leaf size 50. This is trained with labeled source observations and we then plug the estimate into the conditional absolute loss formula $\widehat{p}(x)|f(x)-1|+(1-\widehat{p}(x))|f(x)|$. 

\paragraph{Bias and RMSE: }
For a fixed setting across all synthetic and semi-synthetic experiments, let $\widehat{\psi}_{e, i}$ denote the target risk estimate from estimator $e$ in Monte Carlo iteration $i \in \{1, \hdots,N_{\textrm{it}}\}$. Let $\psi^{\textrm{oracle}}$ denote the corresponding oracle estimator of the target risk. We define Bias and RMSE for estimator $e$ as:
\begin{equation*}
    \textrm{Bias}_e = \frac{1}{N_{\textrm{it}}} \sum_{i = 1}^{N_{\textrm{it}}} (\widehat{\psi}_{e, i} - \psi^{\textrm{oracle}}), \quad \textrm{RMSE}_e = \sqrt{\frac{1}{N_{\textrm{it}}} \sum_{i = 1}^{N_{\textrm{it}}} (\widehat{\psi}_{e, i} - \psi^{\textrm{oracle}})^2}.
\end{equation*}
\subsection{Real-world Outcome Experiments}\label{sec:nonsynthetic_experiments_details}
\paragraph{Outcome clinical significance: }Here we provide clinical context for outcomes of interest in our real-world outcome experiments. We select these outcomes because they are clinically meaningful, yet not universally observed for patients in the ICU: the outcomes are unobserved unless, for instance, a clinician suspects a condition and subsequently orders further tests. Elevated BNP is an indication of cardiac stress. Troponin-I is a marker of damage to the heart muscle, and is an indicator of myocardial injury, sepsis, or shock, among other conditions. CK is an enzyme released in skeletal  muscle injury, heart damage, infections, rhabdomyolysis, and other serious conditions. PaO2 measures the pressure of oxygen dissolved in the plasma. Low PaO2 characterizes hypoxemia which can reflect underlying conditions such as lung conditions and heart issues. 

\paragraph{Outcome details: }We consider four binary outcomes corresponding to lab results. Each is defined using measurements taken shortly after ICU admission. In particular, we define the elevated BNP, elevated troponin-I, elevated CK, and hypoxemia outcomes, respectively as: 
\begin{equation*}
    \begin{aligned}
        Y_{\textrm{BNP}} &= \mathds{1}\{\max(\mathrm{BNP}) > 500 \text{ pg/mL within 24 hours of ICU admission}\},\\
        Y_{\textrm{trop}} &= \mathds{1}\{\max(\mathrm{troponin-I }) > 1.0 \text{ ng/mL within 24 hours of ICU admission}\},\\
        Y_{\textrm{CK}} &= \mathds{1}\{\max(\mathrm{CK}) > 500 \text{ U/L within 24 hours of ICU admission}\},\\
        Y_{\textrm{hyp}} &= \mathds{1}\{\min(\mathrm{PaO}_2) < 60 \text{ mmHg within 12 hours of ICU admission}\}.
    \end{aligned}
\end{equation*}

\subsection{Robustness Check Details}\label{sec:robustness_check_details}
The robustness check experiments use the same notation as the synthetic experiments. The fixed prediction rule to be evaluated is $f(x) = 0.30 x^\top u + 0.20 x_1$ where $u = \mathds{1}_m \in \RR^m$. We take $\nu(x) = 1.90 \sigma(1.50 x^\top u) + 0.50 \sin(x_1) - 0.60$. 

\paragraph{Selection on observables violation: }To evaluate the robustness to \Cref{assumption:no_unobserved_confounding}, we introduce a latent variable $U \sim \NN(0, 1)$ that affects both source label observability $D$ and the outcome $Y$. In particular, we draw $ D | (X, U, R = 1) \sim \operatorname{Bernoulli}\left(\pi_{\min} + (1-2\pi_{\min}) \sigma(-\alpha^\top X - cU) \right)$ and $Y = f(x) + \nu(x) +cU + \varepsilon$ where $\varepsilon \sim \NN(0, \sigma_\varepsilon^2)$. The scalar $c$ is the parameter we sweep, with $c = 0$ corresponding to selection on observables. As $c$ increases, label observability and outcomes jointly increasingly depend on $U$. 

\paragraph{Bounded likelihood-ratio stress: } To stress \Cref{assumption:bounded_likelihood}, we modify the target distribution to include a mixture component from a distant region in the covariate space. In particular, we take $X^{(T)} \sim (1-\lambda) \NN(\mu_T, \Sigma_T) + \lambda \NN(\mu_T + au, \Sigma_T)$, where $u$ is the unit vector in the direction of increasing loss, $a$ is a fixed offset parameter, and $\lambda$ is the robustness parameter that we sweep. Note that $\lambda =0 $ corresponds to the synthetic experiment setting, and, as $\lambda$ increases, more target mass is placed in a region with low source density. This increases the likelihood ratio.

\paragraph{Covariate shift violation: }To evaluate robustness to \Cref{assumption:cov_shift}, we incorporate a target-only conditional mean shift. In particular, we suppose that target outcomes experience a mean shift such that the target conditional expected loss is $\mu(x | R = 0) = \left( \nu(x) + \left( \gamma x^\top u\right) \right)^2 + \sigma_\varepsilon^2$.
The scalar $\gamma$ is the parameter we sweep, where $\gamma = 0$ corresponds to the covariate shift assumption where $P_S(Y | X) = P_T(Y | X)$. 

\paragraph{Positivity stress: }To stress \Cref{assumption:positivity}, we enforce that a tail region of the source distribution has no observed labels. For a fraction $q$ of tail, we set $D = 0$ in an extreme lower-tail region of the source law. We sweep $q$, where higher values cause more instability for the inverse-propensity weights. 

\paragraph{Additional risk estimate plot:} Note that \Cref{fig:robustness_risk} reports the corresponding estimated target risk curves for the same four experiments whose results are reported in \Cref{fig:robustness}. The red curve denotes the oracle target risk computed from MC estimates as in the synthetic experiments.


\end{document}